\DeclareMathOperator*{\argmax}{arg\,max}
\DeclarePairedDelimiter{\ceil}{\lceil}{\rceil}
\journal{Expert systems with applications}
\begin{document}

\begin{frontmatter}

\title{Benchmarking unsupervised near-duplicate image detection\tnoteref{t1,t2}}
\tnotetext[t1]{\textcopyright 2019. This manuscript version is made available under the CC-BY-NC-ND 4.0 license http://creativecommons.org/licenses/by-nc-nd/4.0/}
\tnotetext[t2]{Full-text published in Expert Systems with Applications available at https://doi.org/10.1016/j.eswa.2019.05.002}

\author[label1]{Lia Morra\corref{cor1}}
\ead{lia.morra@polito.it}

\author[label1]{Fabrizio Lamberti}
\ead{fabrizio.lamberti@polito.it}

\address[label1]{Department of Control and Computer Engineering, Politecnico di Torino, Torino, Italy}
\cortext[cor1]{Corresponding author}

\begin{abstract}
Unsupervised near-duplicate detection has many practical applications ranging from social media analysis and web-scale retrieval, to digital image forensics. It entails running a threshold-limited query on a set of descriptors extracted from the images, with the goal of identifying all possible near-duplicates, while limiting the false positives due to visually similar images. Since the rate of false alarms grows with the dataset size, a very high specificity is thus required, up to $1 - 10^{-9}$ for realistic use cases; this important requirement, however, is often overlooked in literature. In recent years, descriptors based on deep convolutional neural networks have matched or surpassed traditional feature extraction methods in content-based image retrieval tasks. To the best of our knowledge, ours is the first attempt to establish the performance range of deep learning-based descriptors for unsupervised near-duplicate detection on a range of datasets, encompassing a broad spectrum of near-duplicate definitions. We leverage both established and new benchmarks, such as the Mir-Flick Near-Duplicate (MFND) dataset, in which a known ground truth is provided for all possible pairs over a general, large scale image collection. To compare the specificity of different descriptors, we reduce the problem of unsupervised detection to that of binary classification of near-duplicate vs. not-near-duplicate images. The latter can be conveniently characterized using Receiver Operating Curve (ROC). Our findings in general favor the choice of fine-tuning deep convolutional networks, as opposed to using off-the-shelf features, but differences at high specificity settings depend on the  dataset and are often small. The best performance was observed on the MFND benchmark, achieving 96\% sensitivity at a false positive rate of $1.43 \times 10^{-6}$.
\end{abstract}

\begin{keyword}
Near-duplicate detection\sep convolutional neural networks\sep instance-level retrieval\sep unsupervised detection\sep performance analysis\sep image forensics
\end{keyword}

\end{frontmatter}

\section{Introduction}
%
%
%
%

Near-duplicate (ND) image detection or discovery entails finding altered or alternative versions of the same image or scene in a large scale collection. This technique has plenty of practical applications, ranging from social media analysis and web-scale retrieval, to digital image forensics. Our work was motivated in particular by applications in the latter domain, as detecting the re--use of photographic material is a key component of several passive image forensics techniques. Examples  include detection of copyright infringements \citep{zhou2017effective, chiu2012video,ke2004efficient}, digital forgery attacks such as cut-and-paste, copy-move and splicing \citep{chennamma2009robust,hirano2006industry}, analysis of media devices seized during criminal investigations  \citep{connor2016quantifying, battiato2014aligning}, tracing the online origin of sequestered content \citep{de2016multiple, amerini2017tracing}, and fraud detection \citep{li2018anti,cicconet2018image}. 

In all the above-mentioned applications, we cannot resort to standard hashing techniques, given that even minimal alterations would make different copies untraceable. Similarly, it is not possible to rely on associated text, tags or taxonomies for retrieval, as done for instance in \citep{gonccalves2018semantic}, since they would likely change in different sites or devices where content is used. Images may be subject to digital forgery, with parts of one or more existing images combined to create fake ones. Therefore, it is imperative to resort to content-based image retrieval techniques for the task of locating near-duplicates. 

Let us consider, as a motivating example, the case of fraud detection. Many companies, like insurance ones, are relying on user-supplied photographic evidence to support business processes \citep{li2018anti}. Photos of the same object or scene may be re-used multiple times to obtain an unfair advantage: such frauds are unlikely to be detected unless a largely automatic image analysis system is in place.

It should be noticed that we are adopting a very broad definition of ND, encompassing all images of the same object or scene, whereas many papers in literature restrict the definition to copies of the same digital sources that have been digitally manipulated \citep{connor2015identification, foo2007pruning}. Naturally occurring NDs, such as images of the same scene or object acquired at different times or from different viewpoints, are often more challenging to detect. However, in emerging applications such as fraud detection, which motivate our work, we do not wish to restrict ourselves to either definition: as a matter of fact, we have no reason to assume that, when constructing fraudulent claims, digital content manipulation is more likely than simply acquiring different shots of the same scene. This broad definition brings ND detection closer to the task of instance-level image retrieval, which is abundantly studied in the literature, but with a crucial difference: while the latter is usually formulated as a \textit{human-guided} supervised search, the former needs as little human supervision as possible. To achieve this goal, we need to re-frame the problem from a \textit{supervised} $K$--nearest neighbors search to an \textit{unsupervised} threshold-limited search, where the distance is used as a \textit{classification} function to distinguish ND from non-ND pairs.

Realistic datasets in image forensics and fraud detection range between $10^{5}$--$10^{7}$ images \citep{connor2016quantifying}. Since the number of possible pairs grows quadratically with the dataset size, a very low false positive rate (or conversely, a high specificity) is needed to obtain a tractable number of false alarms and therefore be acceptable by the end user. For a dataset of $10^{6}$ images, a false positive rate of $10^{-9}$, which would be considered exceptionally low in many applications, would still translate to 500 false alarms. 

In recent years, deep convolutional neural networks (CNNs) have shown unprecedented performance in many computer vision tasks, and content-based image retrieval is no exception. To the best of our knowledge, very few papers have exploited CNNs-based descriptor for ND detection, but if we look at the closely related task of instance-level retrieval, a consistent body of research has emerged in recent years favoring the adoption of CNN-based representation over traditional SIFT-based methods \citep{zheng2017sift}. Experimental results on several benchmark datasets show that they achieve better performance, use more compact representations and are faster to compute \citep{zheng2017sift}. However, given the need to re-frame the problem as an \textit{unsupervised} threshold-limited search (where the overall performance is dominated by specificity rather than sensitivity), it is not straightforward to evaluate whether unsupervised near-duplicate search lies within the grasp of the current state-of-the-art. To the best of our knowledge, only \cite{connor2016quantifying} have previously addressed the issue of quantifying the performance of unsupervised near-duplicate detection \citep{connor2016quantifying}, and ours is the first contribution to specifically characterize deep learning descriptors on a wide range of ND categories.

One of the underlying reasons for is certainly the lack of suitably annotated benchmark datasets, as well as of an established methodology to measure a descriptor's performance. It is crucial that benchmarks for ND detection include a sufficiently large number of \textit{negative queries}, i.e., images for which the absence of NDs has been established, in order to assess both specificity and sensitivity. In some cases, we can resort to digital transformations to simulate NDs, but this is not applicable to all transformations.

Instance-level retrieval benchmarks, such as the Oxford5k, Ukbench and Holidays datasets, comprise a variety of naturally occurring and challenging NDs, but are rather small scale and include only clusters of related images \citep{zheng2017sift}. Recently, a new benchmark has become available to address the specific needs of ND detection: the Mir-Flickr Near Duplicate (MFND) dataset, based on the pre-existing MIR-Flickr collection \citep{connor2015identification}. In this benchmark,  {a large number of} NDs were mined using a semi-automatic procedure, so that the remaining images can be assumed to be negative queries;  {however, in their initial search Connor and colleagues focused on specific subclasses of NDs that limit the representativeness of this benchmark for applications such as fraud detection}.  \citet{connor2016quantifying} showed that the problem of unsupervised detection could thus be characterized as a binary classification problem, and we build upon their contribution for our experimental methodology.

The overarching objective of this study is to evaluate the performance of state-of-the-art deep learning descriptors and establish a baseline against which future research can be compared. A thorough experimental comparison includes a wide range of established and emerging public benchmarks, as well as data from a real-life fraud detection case study. Our contributions can be summarized as follows:
\begin{itemize}
\item we compare the performance of CNN-based descriptors on the task of unsupervised near-duplicate detection, and show empirically on a variety of datasets that specificity has a large impact on the relative ranking of different descriptors;
\item we extend considerably the available annotations for the MFND benchmark to obtain a large-scale benchmark which supports a wide range of ND definitions and use cases; 
\item finally, we extend previous work by \citet{connor2016quantifying} towards a principled evaluation methodology that captures the performance requirements of unsupervised ND discovery;
we show analytically and experimentally that by using hard negative mining, we can approximate the Area under the ROC curve ($AUC$) that can be used to rank the performance different descriptors.
\end{itemize}

The rest of the paper is organized as follows: in Section \ref{Related}, related work on instance retrieval and ND detection is reviewed. Section \ref{Datasets} introduces the datasets that are considered in the experiments. The evaluation methodology is presented in Section \ref{Perfo}, whereas the experimental setup is described in Section \ref{Experimental}. Results are presented and discussed in Sections \ref{Results} and \ref{Conclusion}, respectively.


\section{Related work}
\label{Related}

\subsection{Content-based image retrieval and instance-level retrieval}

Content-based image retrieval systems (CBIR) are designed to retrieve semantically similar images within a database based on a specific query (e.g., by providing another image). This problem can be decomposed in two main challenges: describing image content in terms of visual features, and conducting an exact or approximate nearest neighbor search based on a distance measure \citep{zheng2017sift,bay2008speeded}. Such features can be hand-crafted, or learned from data by using deep CNNs. In this section, we will review feature extraction techniques, and refer to existing surveys for the challenges related to feature aggregation, quantization, indexing and distance measures \citep{zheng2017sift,zhou2017recent}.  

\subsubsection{Hand-crafted features}

\textit{Global features} based on the characteristics of the entire image (color, shape, texture, histogram, etc..) were extensively used in early CBIR systems. In the early 2000s, \textit{local feature extraction} emerged as a more effective alternative, which generally involves two key steps: key interest point detection and local region description. In the first step, key salient features in the image are identified with high repeatability, using dense sampling or more commonly by detecting local extrema in the scale-space domain (e.g. Difference of Gaussians, Hessian matrix, etc.). One or multiple descriptors are then extracted from the local region centered at each interest point, usually designed to be invariant to rotation changes and robust to affine distortions, addition of noise, illumination changes, etc. The most popular local feature descriptors are SIFT and SURF \citep{zheng2017sift}. SIFT-based approaches generally yield very large feature sets, in the order of the thousands per image. The Bag of Visual Word (BoVW) is the most common approach for feature reduction and quantization in CBIR and instance retrieval. 
\subsubsection{Deep learning approaches}
Since 2015, deep learning has become the state of the art approach to CBIR \citep{wan2014deep,gordo2016deep,balntas2016learning,babenko2015aggregating,zagoruyko2015learning}. Deep CNNs have the distinct advantage of learning hierarchical, high-level abstractions close to the human cognition processes.   Similarly to SIFT, CNNs can be trained to extract features from local regions of interests (patches), after detecting key interest points, which are then quantized e.g., using the BoVW \citep{zagoruyko2015learning, balntas2016learning}.  Alternatively, it is possible to extract semantic-aware features from the activations of  top convolutional layers in an image: it can be shown that such feature vectors can be interpreted as an approximate many-to-many region matching, without the need to explicitly extract key points, and with the advantage of obtaining faster and more compact representations.
To this aim, two fundamental approaches are available. In the first approach, feature extraction is based on pre-trained models, like the VGG network trained for object recognition, alone or in combination with traditional visual features  \citep{babenko2015aggregating,wan2014deep}.  In the second one, a CNN can be trained to learn a ranking function in an end-to-end fashion, mapping the input space to a target latent space such that the Euclidean distance in latent space approximates visual similarity \citep{gordo2016deep, wang2014learning}. In order to optimize a ranking loss, a special architecture called  {a Siamese network} is used \citep{wang2014learning, gordo2016deep,gordo2017end}. Usually, descriptors are pre-trained on ImageNet to learn image semantics, and then fine-tuned on a second training set with relevance information \citep{wang2014learning,gordo2016deep}. 

\subsection{Near-duplicate image detection}
\label{sec:nnid_related}

Several works have focused on near-duplicate image detection as a distinct application from content-based image retrieval \citep{chennamma2009robust,foo2007pruning,chum2008near,li2015mining,xie2014fast,hu2009coherent,liu2015variable,xu2010near,kim2015near,battiato2014aligning,zhou2017effective,cicconet2018image,chen2017real,connor2016quantifying}. 

In order to frame our contribution with respect to previous literature, a more precise working definition of near-duplicate is needed. Given the range of potential applications, it comes as no surprise that the definition of near-duplicate image is indeed quite varied. Starting from the work by  {\citet{foo2007pruning}}, two main sources of near-duplicates have been identified in the literature, namely \textit{identical} and \textit{non-identical near duplicates} \citep{foo2007pruning,connor2015identification,jinda2013california, chen2017real}. Identical near-duplicates (INDs) are derived from the same digital source after applying some transformations, including cropping and rescaling, changes in image format, thumbnail resizing,  {insertion} of logos or watermarks, or cosmetic changes (black \& white conversion, image enhancement and so forth).  

Non-identical near-duplicates (NINDs), on the contrary, are defined as images that share the same content (i.e., they depict the same scene or object), but with different illuminations, subject movement, viewpoint changes, occlusions, etc. \citep{foo2007pruning,jinda2013california} Detecting NINDs is deemed more challenging than INDs, and their definition is more subjective \citep{jinda2013california}; for these reasons, many authors have mostly targeted INDs. 

Depending on the type of ND targeted and the level of transformation involved, most papers in literature have either focused on global features or on SIFT features combined with BoWV quantization. Global features have been mostly used for IND detection \citep{connor2016quantifying,chen2017real,li2015mining}. Local descriptors, such as SIFT features combined with BoVW quantization, allow detecting more aggressive alterations (including NINDs), sub-image retrieval or image forgery (e.g. copy-move attacks).  Local descriptors are prone to false positive matches, as they do not take into account spatial coherence; to reduce false alarms, some authors have proposed pruning techniques to improve specificity and scalability \citep{foo2007pruning,liu2015variable}, whereas other authors have focused on post-query verification \citep{zhou2017effective,hu2009coherent,xu2010near}.

From an evaluation point of view, many papers framed the problem of  {ND} detection as a supervised $K-$nearest neighbor search, and few papers have addressed the issue of quantifying the specificity of descriptors when performing unsupervised, threshold-limited near-duplicate discovery \citep{connor2016quantifying,chen2017real,kim2015near}. The most relevant prior work is that by Connor et al, who proposed a method to evaluate the specificity of ND detectors and choosing the optimal distance threshold, based on Receiver Operating Curve (ROC) analysis \citep{connor2016quantifying}. A more in-depth analysis of this methodology, and the extensions that we propose, is available in Section \ref{Perfo}. Other authors have used small test sets to establish the optimal threshold, that was subsequently applied to a larger dataset \citep{chen2017real}. For instance, Chen et al. used bloom filtering and range queries to detect duplicate images under scaling, watermarking and format change transformation \citep{chen2017real}; for evaluation purposes, they estimated the percentage of false and correct rejections, as well as precision and recall curves, on a smaller \sout{case} dataset.

\section{Datasets}
\label{Datasets}

The experimental results presented in this paper were based on four image collections, including  {a private dataset} (Section \ref{sec:claims}) and three publicly available benchmarks (Sections \ref{sec:MFND}, \ref{sec:Cali-ND} and \ref{sec:Holidays}). The CLAIMS dataset was collected for insurance purposes, and therefore constitutes a realistic case study for fraud detection applications. The MFND dataset \citep{connor2015identification}, based on the MIR-Flickr image retrieval benchmark \citep{huiskes2008mir}, contains a variety of both INDs and NINDs. 
Both the California-ND and Holidays datasets contain personal holiday photos and, while much smaller in size,  {include} several challenging NIND examples \citep{jegou2008hamming,jinda2013california}. Examples of ND pairs of different  {complexity} are given in Fig. \ref{fig:exampleND}, whereas a summary of the datasets characteristics is reported in Table \ref{tab:datasets}.

\begin{figure*}[!t]
\centering
\includegraphics[width=\linewidth]{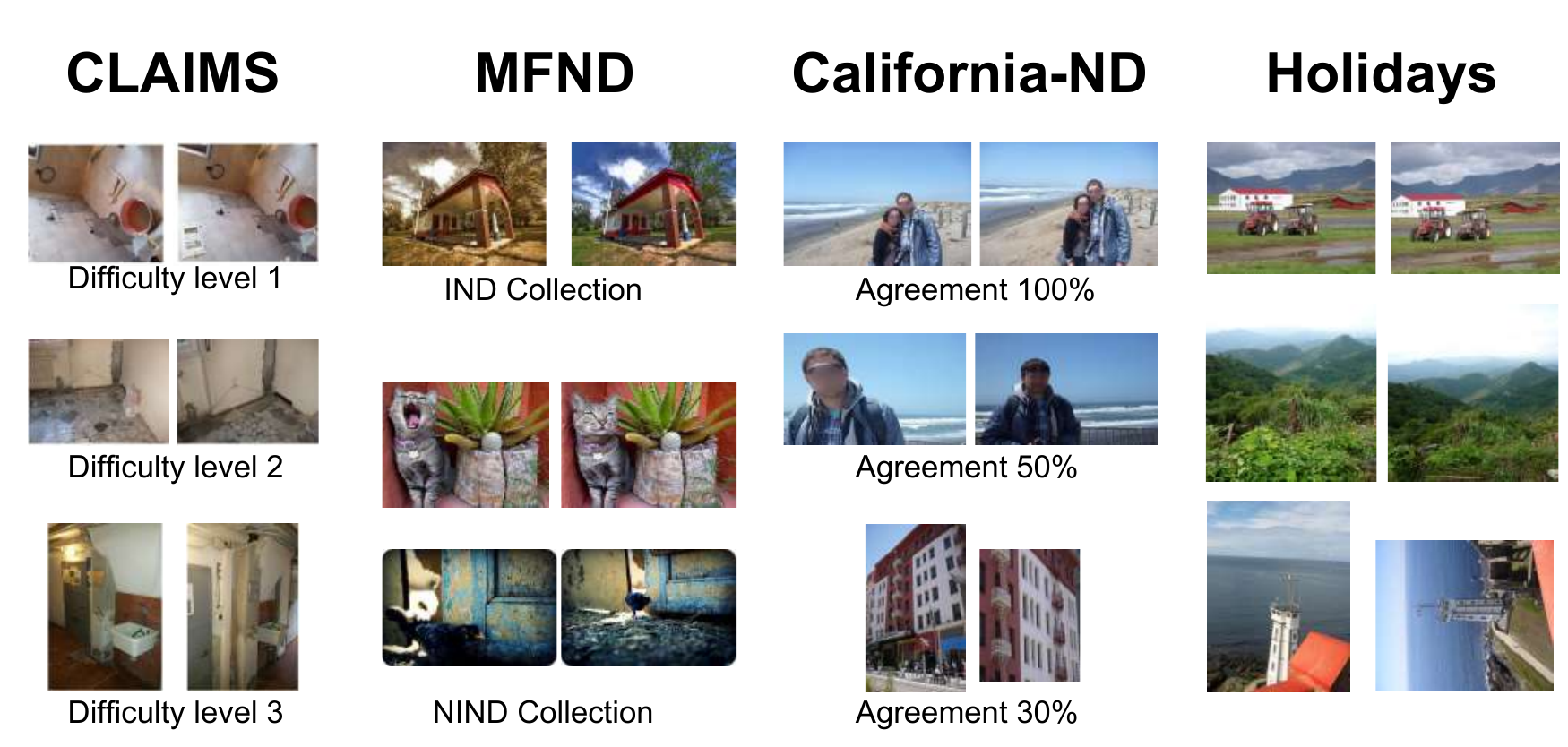}
\caption{Examples of near duplicates pairs of varying complexity from the four datasets included in the comparison. For the CLAIMS dataset, difficulty was evaluated subjectively by one rater, whereas for the California-ND, it was established based on the agreement between 10 independent raters.}
\label{fig:exampleND}
\end{figure*}

\begin{table}[t]
\centering
\resizebox{\textwidth}{!}{
\begin{tabular}{c | c | l | l}

\bfseries Dataset  &  \bfseries Size  & \bfseries IND clusters (pairs)  & \bfseries NIND clusters (pairs) \\
\hline
CLAIMS & 201,961 & NA &1037  (1,475) \\
\hline
MFND & 1,000,000 &   {3,825 (4,672)}   &  {10,454 (18,299)} \\
\hline
California-ND & 701 & NA &  107 (4,609) \\
\hline 
Holidays & 1,491 & NA & 500 (2,072) \\
\hline

\end{tabular}
}
\caption{Comparison of the benchmark datasets. Related IND or NIND pairs were  {grouped} into clusters; the average number of images per cluster ranges from 2.07 to 6.55. IND and NIND pairs are counted separately, where applicable.}

\label{tab:datasets}
\end{table}

\subsection{CLAIMS dataset}
\label{sec:claims}

The CLAIMS dataset includes a variety of indoor and outdoor scenes, mostly from residential and commercial  buildings. It contains a total of 201,961 images coming from 22,327 claims. Image subsets that are associated  {with} a given claim generally include images of the same scenes or objects, representing a source of relatively rapidly identifiable NINDs. More infrequently, images from different claims may also represent the same scene or object. This dataset contains both IND (e.g. insertion of small captions or logos, changes in aspect ratio, format change, compression, etc.) and NIND clusters (e.g. sequential snapshots of the same scene, viewpoint changes, etc.).

The collection was annotated to generate both positive queries (i.e., with  {known} NDs) and negative queries (i.e., for which absence of NDs was confirmed). NDs were annotated following a semi-manual procedure, in which a set of claims was randomly selected. For each claim, all potential image pairs were generated and the ND pairs were manually selected. Connected pairs of NDs from the same claim were grouped to form clusters.

Non-near duplicate (NND) pairs were randomly extracted following a hard negative mining strategy (see  {Section} \ref{secROCanalysis} for details). The results were visually inspected obtaining additional 103 near-duplicate pairs.  The final annotated set included 1,475 ND pairs, forming  1,037 distinct clusters; the average number of images per cluster is 2.2.

\subsection{MirFlickr Near Duplicate}
\label{sec:MFND}

The MIR-Flickr Near Duplicate (MFND) collection is a recent revisitation of the MIR-Flickr image retrieval benchmark \citep{huiskes2008mir}. Connor and colleagues observed a significant number of NDs in this one million image collection, which were semi-automatically retrieved using different ND finders \citep{connor2015identification}. We have expanded their annotations by adopting a broader definition of ND, as well as using different descriptors. 

The first MFND annotation was generated using a set of four global descriptors (based on MPEG-7 and perceptual Hashing global features) and five distance measures, which were  combined to form different similarity functions \citep{connor2015identification}. A threshold-limited nearest-neighbor search was conducted using approximated metric search techniques, yielding a few  {thousand potential ND} pairs for every function.  {We have expanded this annotation by using the three CNN-based descriptors included in this study, and the Euclidean distance. Several threshold-limited, $K$-nearest neighbor searches were performed (with $K$=5 and $K$=1), yielding a few hundred thousands potential ND pairs which were visually inspected.} Exact duplicates were eliminated based on the MD5 hash.

Each of the resulting image pairs  {was} manually assigned to one of three categories, IND, NIND or other, following the categorization illustrated in  {Section \ref{sec:nnid_related}}  \citep{connor2015identification}. 
The strength of this methodology is that it minimizes biases with respect to the images in the collection,  {as well as to} the method with which the near-duplicates have been  {detected}.  We assumed, as in previous work \citep{connor2015identification}, that both IND and NIND relations are transitive, allowing the identification of clusters of images that share the same content.  {The resulting clusters were also visually inspected for consistency.}

As for the CLAIMS collection, NND pairs were generated through a hard negative mining procedure; results were visually inspected identifying 120 additional NIND pairs. 

The available annotations were thus substantially extended from 1,958 to 3,825 IND clusters (4,672 vs. 2,407 pairs) and from 379 to 10,454 NIND clusters (18,299 pairs). Many new IND pairs detected were subject to digital content manipulations, cropping or color alterations; we found that CNN-based descriptors were particularly robust to colorization techniques. A total of 30,925 images were found to have at least one IND or NIND in the collection, with a mean cluster size of 2.2.

\subsection{California-ND}
\label{sec:Cali-ND}

The California-ND collection comprises 701 photos taken from a real user's personal photo collection \citep{jinda2013california}. It includes many challenging NIND cases, without resorting to artificial image transformations. To account for the intrinsic ambiguity of NIND definition, the collection was manually annotated by 10 different observers, including the photographer himself. Instructions such as ``If any two (or more) images look similar in visual appearance, or convey similar concepts to you, label them as near-duplicates.'' were given to the raters.
Out of 245,350 unique possible combinations, 4,609 image pairs were identified as ND by at least one subject; notably, in 82\% of the cases raters disagreed to some extent on whether or not a pair of images should be considered ND. The image pairs form 107 clusters of NIND images, where each cluster contains on average 6.55 images; the ND pairs were grouped assuming that the ND relationship is transitive (which is not generally the case, but seemed reasonable in this particular situation). 

\subsection{Holidays}
\label{sec:Holidays}

The INRIA Holidays dataset \citep{jegou2008hamming}, a popular benchmark for instance retrieval, is mainly composed by the authors' personal holidays photos. The images, all high resolution, include a large variety of scene types (natural, man-made, water and fire effects, etc). Images were grouped by the authors in 500 disjoint image clusters, each representing a distinct scene or object, for a total of 1,491 images and an average cluster size of 2.98 images. From the 500 clusters, a total of 2,072 ND pairs, mostly NIND, can be identified.

\section{Performance evaluation}
\label{Perfo}
In this section, we illustrate the performance metrics and protocol used to evaluate the specificity and sensitivity of unsupervised ND detection. In  {the} first battery of test, we extended the work of Connor and colleagues \citep{connor2016quantifying}, reducing the problem of unsupervised discovery to that of binary classification; ROC analysis can be used to measure the ability of a ND detector to distinguish ND pairs from visually similar examples, as detailed in Section \ref{secROCanalysis}. The second battery of test involves estimating the average false positive rates generated by a negative query, and is explained in Section \ref{secFROCanalysis}; the relationship between these two performance measures is also explored.  An overview of the methodology is presented in Fig. \ref{fig:methods}.

\begin{figure*}[th!]
\centering
\includegraphics[width=\linewidth]{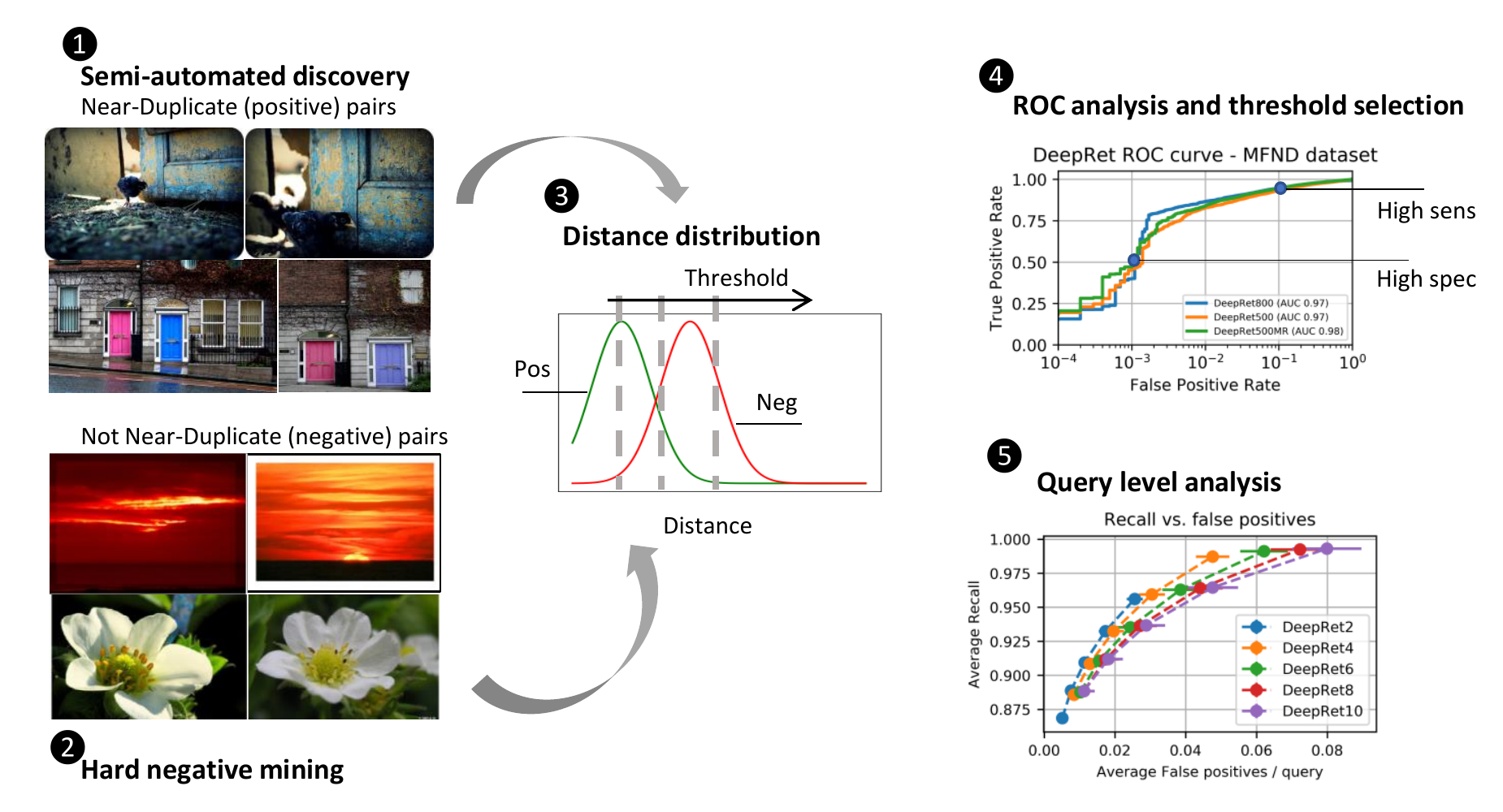}
\caption{Methodology employed to calculate the performance on the MFND benchmark. First, all near-duplicate pairs are discovered through a semi-supervised search technique (step 1). On the  {remainder} of the collection, hard negative mining is used to identify hard samples of visually similar, but not  {near-duplicate pairs} (step 2); crucially, this step needs to be repeated  {for each descriptor}. Using the distance as a classification function (step 3), ROC analysis can be used to characterize the ability of the descriptor to distinguish near-duplicates from not-near duplicate pairs. From ROC analysis, suitable thresholds on the distance can be selected based on the application requirements. Performance at query time can be thus be reliably estimated (step 5). }
\label{fig:methods}
\end{figure*}

\subsection{ROC analysis}
\label{secROCanalysis}

As suggested by  \citet{connor2016quantifying}, a near-duplicate finder can be modeled as a positive numeric function  $D$ over any two image descriptors $x$ and $y$, where normally $D$ will be a proper distance metric. To run an unsupervised search, it is necessary to use $D$ as a classification function over images pairs, which without loss of generality can be achieved by choosing a threshold $t$:
\begin{equation}
D_{t} \left ( x , y \right ) = D \left ( x , y \right ) < t
\end{equation} 
 
The problem of unsupervised discovery can be characterized as finding the near-duplicate intersection of two image sets $X \cap_{ND} Y$, that is the set of pairs of images from sets $X$ and $Y$ that satisfy the conceptual near-duplicate relation ND \citep{connor2016quantifying}.   If $Sens\left( t \right )$ and  $Spec\left( t \right )$ are the sensitivity and specificity of $D_{t} \left ( x , y \right )$,  the number of true positive (TP) matches will be
\begin{equation}
TP\left ( t  \right ) =Sens \left ( t  \right ) \left | X \cap_{ND} Y \right | 
\end{equation}
and the number of false positives (FP) will be
\begin{equation}
FP\left ( t  \right ) =\left ( 1 - Spec \left ( t  \right ) \right )\left | X  \right | \left | Y  \right | 
 \label{eqFP}
\end{equation}
assuming that $ \left | X \cap_{ND} Y \right |  <<  \left | Y \right |$. In our setting, $\left | X \right | = K$ is the number of query images and $\left | Y \right | = M$ is the size of the collection. Another useful figure to define is the number of false positives / query image, which can be computed as
\begin{equation}
FP_{q}\left ( t  \right ) =\left ( 1 - Spec \left ( t  \right ) \right ) M
 \label{eqFPq}
\end{equation} 

Given a set of ND and NND pairs, the sensitivity (or recall) and the specificity of a ND detector can be estimated as follows:
\begin{equation}
Sens\left ( t  \right ) = \frac{\textnormal{No. of correctly identified ND pairs}}{ \textnormal{Total no. of ND pairs}}  
\end{equation} 
\begin{equation}
Spec\left ( t  \right ) = \frac{\textnormal{No. of correctly identified NND pairs }}{ \textnormal{Total no. of NND pairs}}  
\end{equation} 

Both quantities are function of the threshold $t$, and the overall performance can be characterized by ROC analysis.

\subsection{Hard negative mining}
\label{hardnegsec}
In a realistic dataset the pool of NND images is very large, compared to the number of ND pairs - it is not  {feasible} to evaluate all possible pairs. Hard negative mining extracts a compact set of NND from a large image collection, starting from a subset of randomly selected query images, for which we can  assume that a near-duplicate match does not exist in the collection. For each query image, the pairwise distances between the query images and all the other images in the collection are calculated, and the most "difficult" examples are selected. 

Starting from a random sample of query images, two hard negative mining strategies were considered:
\begin{itemize}
\item the nearest neighbor for each query is selected (\textit{hn1});
\item the $K-$nearest neighbors for all  {queries} are retrieved and sorted; the most difficult pairs (i.e., those with the smallest distances) are then selected (\textit{hn2}).
\end{itemize}

Notably, the hard negative mining procedure depends on the relative ranking of the images, and hence has to be repeated for each descriptor and for each distance formulation.

The distances of the hard negatives are among the smallest of the $K \times M$ distances measured, where $K$ is the number of query images and $M$ is the dimension of the dataset: if a distance threshold exists such that all the ``difficult'' pairs are successfully identified,  {then} we can assume that all potential NND pairs in the collection will be identified as well. For instance, for the CLAIMS dataset $K = 4400$ and $M = 80,000$, yielding a specificity of $ 1 - 1 / 353,000,000 = 1 - 2.83 \times 10 ^{-9} $,  {which is the smallest possible sensitivity that can be measured in this setting}.

The specificity measured on the hard negative samples can be used to approximate the true specificity that would be observed at large. For instance, a .9 specificity (.1 FP rate), would allow to successfully discard $0.9 M$ NND pairs; however,  {it} would also fail to discard at least $0.1 K$ NND pairs, and hence would correspond to a specificity of at most $1 - 1.25 \times 10  ^{-6} $. In this way, it is possible to estimate a lower bound on the amount of FPs generated on datasets of arbitrary size.

\subsubsection{Area under the ROC curve}
The AUC is a common summary metrics that quantifies the global performance of a classifier \citep{bradley1997use}. We estimated the AUC  {for each descriptor and }95\% confidence intervals were calculated under the normal assumption according to  { \citet{hanley1982meaning}}.

Since the ROC is calculated on a subset of all possible negative pairs, the resulting AUC will be an approximation of the true AUC if all pairs were taken into account. We will refer in the following to $AUC_{hn1}$ and  $AUC_{hn2}$ to denote the $AUC$ calculated on pairs extracted using hard negative mining strategies $hn1$ and $hn2$, respectively.

\newcommand{\1}[1]{\mathbbm{1}_{#1}}

Let $p_1,...,p_{N^+}$ be ND pairs (i.e., positive samples) and $n_1,...,n_{N^-}$ be all the NND pairs (i.e., negative samples), where in our case $N^- = K \times M$. The AUC can be expressed as a sum of indicator functions \citep{hanley1982meaning}:
\begin{equation}
    AUC = \frac{1}{N^+N^-}\sum_{i=1}^{N^+} \sum_{j=1}^{N^-}\1{f(p_i) > f(n_j)}
    \label{eq:auc}
\end{equation}
where $f(\cdot)$ is a scoring function which, in our case, is the distance between the descriptors of the two images in the pair \footnote{We follow here the notation normally used in ROC literature where the positive samples are expected to be scored higher than negative samples, whereas in our case the scoring function is a distance and pairs with lower distance would be scored higher}. For simplicity, we omit $f(\cdot)$ from the notation in the rest of the paper. 

Let $n_l,...,n_{H^-}$ be the hard negative NND pairs, where $H^- \ll N^-$. The estimated AUC can be expressed as follows:
\begin{equation}
    AUC_{hn} = \frac{1}{N^+H^-}\sum_{i=1}^{N^+} \sum_{l=1}^{H^-}\1{p_i > n_l}
    \label{eq:auch}
\end{equation}

Since in general
\begin{equation}
    \1{p_i > n_j} \leq \1{p_i > n_l}  \quad  \forall n_j \in \mathbbm{N^- - H^-} \quad  \forall n_l \in \mathbbm{H^-}
    \label{eq:inequality}
\end{equation}
it can be demonstrated that for both hard negative mining strategies $AUC_{hn}$ is an upper bound for the true $AUC$. It follows that the most appropriate choice would be to use the strategy that provides the tighter bound. Analytical proof is provided in \ref{sec:app1}.

\subsection{Range query performance}
\label{secFROCanalysis}

ROC analysis does not directly represent the observed system performance, which also depends on the distribution of the type of images, the size of the clusters, and so forth. We analyzed an alternative performance measure, obtained by simulating the case of a single query image $x$ compared against a collection of images $Y$, which is a special case of the general problem of near-duplicate detection described in Section \ref{secROCanalysis}. An unsupervised, threshold-limited range search is conducted to retrieve a list of potential near-duplicates, and used to estimate the number of FPs/query or $FP_{q}\left ( t  \right )$. In practice, it is convenient to restrict the search to the $K$--nearest neighbors  {in order} to cap the number of FPs/query to a reasonable number. 
The proposed experimental setup executes a number of positive queries (i.e., images with one or more known ND), and negative queries (i.e., images that have no expected NDs), over a dataset constructed as follows: 
\begin{itemize}
\item positive queries were derived from the clusters of ND images, where the first image are used as queries and the rest are inserted in the database, as normally done for Holidays and other image retrieval benchmarks;
\item negative query images were selected from the NND pairs, and a set of distractors are used to evaluate specificity; in practice,  {we use for convenience} the same image pool used for hard negative mining. 
\end{itemize}

For varying values of the threshold $t = T_{i}$ on the distance measure, we compared the \textit{average recall}, calculated over all positive queries, and the \textit{average number of FPs/query}, calculated over all negative queries. Note that average recall is different from pair-wise sensitivity used in ROC analysis, as each query may contain multiple pairs of varying ``difficulty''. 
The FPs/query depend on the size of the dataset and the specificity as predicted by Equation \ref{eqFP}.

\section{Experimental setup}
\label{Experimental}
In this section, a detailed analysis of the experimental  {setup} is given concerning the descriptors selection, their implementation {,} and the hard negative mining parameters. 

\subsection{Descriptors}
\label{Descriptors}

Two sets of descriptors were compared in this work: global descriptors, and CNN-based descriptors; for the latter, we compared examples of the two main approaches (aggregation of raw deep convolutional features without embedding and Siamese networks) described in Section \ref{Related}. 

Among global descriptors, \textit{GIST} \citep{oliva2001modeling} was selected based on previous results on the MFND collection \citep{connor2016quantifying}. The GIST, or spatial envelope,  {is} a bio-inspired feature that simulates human visual perception to extract rough but concise context information \citep{oliva2001modeling}. The input image  {is}  decomposed using spatial pyramid into $N$ blocks, filtered by a number of multi-scale, multi-orientation Gabor filters (4 scales, 8 orientations per scale), and then summarized by a feature extractor that captures the ''gist'' of the image, handling translational, angular, scale and illumination changes. We experimented with perceptual Hashing, however the results are not reported as they were generally very poor. 

The \textit{ SPoC (Sum-Pooled Convolutional)} descriptor was initially proposed by  {\citet{babenko2015aggregating}}. The features are extracted from the top convolutional layer of a  {pre-trained} neural network and spatially aggregated using sum pooling. The length of the feature vector will thus be equal to the depth of the final convolutional layer (usually in the order of the hundreds). Best results were obtained extracting features after ReLU activation, confirming previous findings \citep{babenko2015aggregating}. PCA whitening and compression is applied, and the vectors are normalized to unit length (L2 normalization).

The \textit{R-MAC} architecture, proposed by \citet{tolias2015particular} builds a compact feature vectors by encoding several image regions in a single pass. First, sub-regions are defined using a fixed grid over a range of progressively finer scales $l$ ranging from 1 to $L$; then, max-pooling is used to extract features from each individual region. Each region feature vector is post-processed with PCA-whitening and L2 normalization. Finally, the regional feature vectors are summed into a single image vector, which is again L2 normalized.

The \textit{DeepRetrieval} architecture, proposed by  {\citet{gordo2016deep}}, employs a Siamese network to learn a ranking function based on the triplet loss function. Let $I_{q}$ be a query image with descriptor $q$, $ I_{+}$ be a relevant image with descriptor  $ d_{+}$, and  $ I_{-}$ be a non-relevant image with descriptor  $ d_{-}$. The ranking triplet loss is defined as
\begin{equation}
L\left ( I_{q}, I_{+},  I_{-} \right ) = \frac{1}{2}max(0, m + \left \|  q - d_{+} \right \|^{2} - \left \|  q - d_{-}\right \|^{2} )
\end{equation}
where $m$ is a scalar that controls the margin. At test time, the features are extracted from the top convolutional layer and aggregated using sum-pooling and normalization. The Deep Retrieval architecture includes an additional proposal network, similar to the R-MAC grid network, so that the features are calculated on several potential regions of interest, as opposed to the entire image. The Deep Retrieval network is pre-trained on the Landmarks dataset \citep{babenko2014neural}.

\subsection{Implementation}
\label{implementation}

 The tested descriptors, and related parameters, are summarized in Table \ref{tab:desc}. For SPoC and GIST, images were resized to $512 \times 512$, whereas for Deep Retrieval images were rescaled so that the longest side is equal to S.
 
 A Python re-implementation of the original Matlab code by Olive and Torralba was used for GIST, after converting images to grayscale\footnote{http://people.csail.mit.edu/torralba/code/spatialenvelope/}. 
The SPoC descriptor was computed from pre-trained networks architectures such as VGG \citep{simonyan2014very} and Residual Networks \citep{he2016deep}. We included both models pre-trained on ImageNet \citep{deng2009imagenet}, as well as on the Places205 or Places365 datasets \citep{zhou2014learning,zhou2017places}, and on a hybrid dataset including images from both ImageNet and Places.  {The R-MAC descriptor was re-implemented in Python based on the original Matlab implementation by the authors\footnote{https://github.com/gtolias/rmac}; R-MAC was calculated only for the ResNet101 architecture.}
All networks were available in Caffe; pre-trained models were downloaded from the Caffe Model Zoo\footnote{https://github.com/BVLC/caffe/wiki/Model-Zoo}, or were made available by the authors for the Places dataset\footnote{https://github.com/CSAILVision/places365}$^{,}$\footnote{http://www.europe.naverlabs.com/Research/Computer-Vision/Learning-Visual-Representations/Deep-Image-Retrieval}.  

We trained the PCA parameters, without reducing the number of features, on a representative set of the collection (95,230 for CLAIMS and 100,000 for MFND), which was not used in testing. The parameters trained on the MFND collection were also used for the Holidays and California-ND collection; although the image characteristics in the two datasets is different, this is consistent with previous works in literature \citep{jegou2008hamming}.

The FAISS library \citep{johnson2017billion}, specifically the flat index with L2 exact search, was used to index the collection and perform range queries. All experiments were run on a system with a i7-7700 CPU @3.60GHz and GTX1080Ti nVIDIA GPU.

\begin{table*}[th!]

\centering
\resizebox{\textwidth}{!}{
\begin{tabular}{c|c|c|c}
\bfseries Descriptor & \bfseries Label   & \bfseries Size  & \bfseries  Parameters
 \\
\hline\hline
\hline
 GIST \citep{oliva2001modeling} & GIST4 & 512 &  number of blocks = 4 \\
GIST \citep{oliva2001modeling} & GIST8 & 512 &  number of blocks = 8 \\
\hline
 Deep Retrieval \citep{gordo2016deep}  & DeepRet800 &2048 &  {ResNet101, Fine-tuned on Landmarks dataset}, S = 800, no multiresolution \\
Deep Retrieval \citep{gordo2016deep} & DeepRet500 & 2048 &  {ResNet101, Fine-tuned on Landmarks dataset}, S = 500, no multiresolution \\
Deep Retrieval \citep{gordo2016deep} & DeepRet500MR &  2048 &  {ResNet101, Fine-tuned on Landmarks dataset}, S = 500, multiresolution (2) \\
\hline
SPOC \citep{babenko2015aggregating}  & SP\_VGG19IN &512 & VGG19, Trained on ImageNet \\
SPOC \citep{babenko2015aggregating}  & SP\_VGG16PL &512 & VGG16, Trained on Places205 \\
aSPOC \citep{babenko2015aggregating}  & SP\_VGG16HY &512 & VGG16, Trained on Hybrid (Places205 \& ImageNet) dataset \\
SPOC \citep{babenko2015aggregating}  & SP\_ResNet101IM &2048 & ResNet101, Trained on ImageNet dataset \\
SPOC \citep{babenko2015aggregating}  & SP\_ResNet152IM &2048 & ResNet152, Trained on ImageNet dataset \\
SPOC \citep{babenko2015aggregating}  & SP\_ResNet152HY &2048 & ResNet152, Trained on Hybrid  (Places365 \& ImageNet) dataset \\
\hline
 {R-MAC \cite{tolias2015particular}} & RMAC & 2048 & ResNet101, Trained on ImageNet dataset, $L$=2  \\

\hline
\hline
\end{tabular}}
\caption{Synthetic description of the descriptors used.}
\label{tab:desc}
\end{table*}

\subsection{Hard negative mining}
\label{hnimpl}
For each dataset (CLAIMS and MFND), we randomly selected a set of negative query images (4,500 and 5,000, respectively), and a larger pool for mining (80,000 and 70,000 images, respectively), after excluding IND or NIND pairs and images used to train the PCA parameters. The two hard negative mining strategies introduced in Section \ref{hardnegsec}, were compared: in \textit{hn2}, the 10 nearest neighbors were found for each query images, and then the 10,000 most difficult pairs were selected. The number of samples was increased for \textit{hn2} to account for images that belong to more than one pair, which we observed experimentally, and ensure sufficient diversity. For GIST and Deep Retrieval, the hard negative mining procedure was calculated for one parameter set, to reduce the computational cost. For the Holidays and California-ND datasets, we used the NND pairs mined for MFND; this  is consistent with previous works that have used the same collection as distractors for large scale retrieval testing \citep{jegou2008hamming}.
For both datasets, the hard negative mining procedure was repeated for a large number of descriptors (8 for MFND, and 13 for the CLAIMS dataset), and the results were visually inspected for the presence of near duplicates. A total of 101 (2.2\%) and 121 (2.4\%) ND pairs were found for the CLAIMS and MFND datasets, respectively, and their labels were changed accordingly. 

\section{Results}
\label{Results}
In this section, results of the ROC analysis are compared across different descriptors (Section \ref{resROC}) and different datasets (Section \ref{datasetcomp}). Finally, the performance obtained from random queries is analyzed and compared with that predicted by ROC analysis in Section \ref{perfquery}.

\begin{table*}[th]

\centering
\resizebox{\textwidth}{!}{
\begin{tabular}{|c|c|c|c|c|c|}

\hline
\bfseries Descriptor & CLAIMS  &  \multicolumn{2}{c}{MFND} & California-ND & Holidays  \\
\hline
& \bfseries AUROC  & \bfseries AUROC-IND & \bfseries AUROC-all  & \bfseries AUROC & \bfseries AUROC    \\
\hline
GIST4 &	0.397	(	0.381	--	0.413)	&	0.808	(	0.799	--	0.817)	&	0.5	(	0.491	--	0.509)	&	0.598	(	0.587	--	0.610)	&	0.365	(	0.351	--	0.378)	\\
GIST8&	0.45	(	0.433	--	0.467)	&	0.854	(	0.847	--	0.862)	&	0.561	(	0.552	--	0.569)	&	0.696	(	0.685	--	0.706)	&	0.493	(	0.478	--	0.508)	\\
\hline
DeepRet800&	0.891	(	0.88	--	0.903)	&	0.994	(	0.993	--	0.996)	&	0.983	(	0.981	--	0.984)	&	0.929	(	0.924	--	0.934)	&	0.744	(	0.731	--	0.758)	\\
DeepRet500&	0.88	(	0.868	--	0.892)	&	0.992	(	0.991	--	0.994)	&	0.979	(	0.978	--	0.981)	&	0.917	(	0.911	--	0.923)	&	0.748	(	0.734	--	0.761)	\\
DeepRet500MR&	0.891	(	0.88	--	0.903)	&	0.992	(	0.99	--	0.994)	&	0.983	(	0.982	--	0.984)	&	0.938	(	0.933	--	0.943)	&	0.789	(	0.777	--	0.802)	\\
\hline
SP\_VGG19\_IM	&0.391	(	0.375	--	0.407)	&	0.934	(	0.929	--	0.940)	&	0.904	(	0.901	--	0.908)	&	0.88	(	0.873	--	0.887)	&	0.671	(	0.656	--	0.685)	\\
SP\_VGG16\_PL	&0.446	(	0.429	--	0.463)	&	0.907	(	0.901	--	0.914)	&	0.881	(	0.877	--	0.886)	&	0.909	(	0.903	--	0.915)	&	0.675	(	0.66	--	0.689)	\\
SP\_VGG16\_HY&	0.418	(	0.401	--	0.434)	&	0.929	(	0.923	--	0.934)	&	0.906	(	0.903	--	0.910)	&	0.896	(	0.89	--	0.903)	&	0.697	(	0.683	--	0.711)	\\
SP\_ResNet101IM&	0.518	(	0.501	--	0.536)	&	0.961	(	0.957	--	0.965)	&	0.941	(	0.938	--	0.943)	&	0.93	(	0.925	--	0.936)	&	0.776	(	0.763	--	0.789)	\\
SP\_ResNet512IM&	0.522	(	0.505	--	0.540)	&	0.963	(	0.959	--	0.967)	&	0.944	(	0.941	--	0.946)	&	0.921	(	0.916	--	0.927)	&	0.78	(	0.767	--	0.793)	\\
SP\_ResNet512HY&	0.459	(	0.442	--	0.476)	&	0.924	(	0.918	--	0.930)	&	0.916	(	0.913	--	0.920)	&	0.866	(	0.859	--	0.874)	&	0.737	(	0.723	--	0.751)	\\
\hline
RMAC&	0.323	(	0.308	--	0.338)	&	0.99	(	0.988	--	0.992)	&	0.945	(	0.942	--	0.947)	&	0.88	(	0.873	--	0.888)	&	0.737	(	0.723	--	0.751)	\\
\hline
Average & 0.549	&	0.937	&	0.870	&	0.863	&		0.684 \\
\hline
\end{tabular}}
\caption{Area under the ROC curve (AUC) with 95\% confidence intervals. The NND pairs are extracted using the first hard negative mining strategy (\textit{hn1}). For the MFND dataset, the AUC is calculated separately for both IND and NIND pairs (AUROC-all), and for IND pairs vs. NND pairs; in the latter case, NIND are not counted as either FP or TP. The average AUC across all descriptors provides a semi-quantitative estimate of the "difficulty" of each dataset. }

\label{tab:resROC}
\end{table*}

\begin{table*}[th]

\centering
\resizebox{\textwidth}{!}{
\begin{tabular}{|c|c|c|c|c|c|}

\hline
\bfseries Descriptor & CLAIMS  &  \multicolumn{2}{c}{MFND} & California-ND & Holidays  \\
\hline
& \bfseries AUROC  & \bfseries AUROC-IND & \bfseries AUROC-all  & \bfseries AUROC & \bfseries AUROC    \\
\hline
GIST4&	0.108	(	0.101	--	0.114)	&	0.706	(	0.696	--	0.715)	&	0.317	(	0.31	--	0.323)	&	0.398	(	0.389	--	0.408)	&	0.178	(	0.17	--	0.186)	\\
GIST8&	0.121	(	0.114	--	0.128)	&	0.756	(	0.747	--	0.765)	&	0.346	(	0.339	--	0.352)	&	0.462	(	0.452	--	0.472)	&	0.243	(	0.234	--	0.253)	\\
\hline
DeepRet800&	0.381	(	0.366	--	0.395)	&	0.99	(	0.988	--	0.992)	&	0.969	(	0.967	--	0.970)	&	0.929	(	0.924	--	0.934)	&	0.628	(	0.614	--	0.642)	\\
DeepRet500&	0.428	(	0.412	--	0.443)	&	0.987	(	0.985	--	0.989)	&	0.962	(	0.96	--	0.964)	&	0.917	(	0.911	--	0.923)	&	0.614	(	0.6	--	0.628)	\\
DeepRet500MR&	0.46	(	0.444	--	0.476)	&	0.987	(	0.985	--	0.989)	&	0.97	(	0.968	--	0.971)	&	0.938	(	0.933	--	0.943)	&	0.676	(	0.663	--	0.690)	\\
\hline
SP\_VGG19\_IM	&0.24	(	0.229	--	0.251)	&	0.882	(	0.876	--	0.889)	&	0.82	(	0.816	--	0.825)	&	0.787	(	0.779	--	0.796)	&	0.52	(	0.507	--	0.534)	\\
SP\_VGG16\_PL&	0.288	(	0.274	--	0.302)	&	0.866	(	0.859	--	0.873)	&	0.821	(	0.817	--	0.826)	&	0.859	(	0.851	--	0.866)	&	0.577	(	0.563	--	0.591)	\\
SP\_VGG16\_HY&	0.267	(	0.255	--	0.279)	&	0.881	(	0.874	--	0.887)	&	0.834	(	0.83	--	0.838)	&	0.814	(	0.806	--	0.822)	&	0.563	(	0.55	--	0.577)	\\
SP\_ResNet101IM	&0.397	(	0.382	--	0.412)	&	0.943	(	0.939	--	0.948)	&	0.911	(	0.908	--	0.914)	&	0.892	(	0.885	--	0.898)	&	0.685	(	0.671	--	0.698)	\\
SP\_ResNet512IM &	0.396	(	0.381	--	0.411)	&	0.947	(	0.943	--	0.952)	&	0.917	(	0.914	--	0.920)	&	0.885	(	0.878	--	0.891)	&	0.694	(	0.68	--	0.707)	\\
SP\_ResNet512HY	&0.327	(	0.313	--	0.340)	&	0.881	(	0.874	--	0.887)	&	0.866	(	0.862	--	0.870)	&	0.784	(	0.776	--	0.793)	&	0.627	(	0.613	--	0.641)	\\
\hline
RMAC	&0.336	(	0.322	--	0.350)	&	0.985	(	0.983	--	0.988)	&	0.917	(	0.914	--	0.920)	&	0.825	(	0.817	--	0.833)	&	0.641	(	0.627	--	0.655)	\\
\hline
Average & 0.312	&		0.901	&	0.804	&		0.791	&					0.554	\\		
\hline
\end{tabular}}
\caption{Area under the ROC curve (AUC) with 95\% confidence intervals. The NND pairs are extracted using the  {second} hard negative mining strategy (\textit{hn2}). For the MFND dataset, the AUC is calculated separately for both IND and NIND pairs (AUROC-all), and for IND pairs vs. NND pairs; in the latter case, NIND are not counted as either FP or TP. The average AUC across all descriptors provides a semi-quantitative estimate of the "difficulty" of each dataset.}
\label{tab:resROChn2}
\end{table*}

\subsection{ROC analysis}
\label{resROC}
The Area under the ROC curve (AUC) for all descriptors, along with 95\% confidence intervals, is reported in Tables \ref{tab:resROC} and \ref{tab:resROChn2}. There is a large difference in estimated performance depending on the hard negative mining technique employed, with \textit{hn1} yielding optimistically biased estimates. This is most evident for the CLAIMS dataset, which contains a larger number of visually similar, but not duplicate images. 

For IND detection, the difference between global features and deep learning based features is  {less pronounced. The results are lower than previously reported in literature, because the IND dataset has been significantly expanded, and the new pairs include transformations to which previous descriptors were less robust}.
The DeepRetrieval architecture generally outperforms SPoC for all datasets, despite being trained on a different dataset (Landmarks) with no fine-tuning. The actual gap in performance is very low for the MFND dataset, and increases for other datasets, with CLAIMS exhibiting the highest gap. It is worth noting, however, that DeepRetrieval is also more prone to FPs due to visually similar images, and the performance estimates are more sensitive than for SPoC to the hard negative mining strategy.  {The R-MAC descriptor performs slightly better than SPoC for the MFND dataset, and slightly worse for the other datasets.}

The performance of the SPoC descriptor strongly depends on the network architecture, with Residual Networks consistently outperforming VGG on all datasets. The dataset on which the network was trained has instead a limited impact, possibly due to the effect of PCA whitening. ROC curves for selected descriptors and datasets are reported in Fig. \ref{fig:ROCcurves}. The remaining ROC curves are available as supplementary material. On the MFND collection, the best performance is obtained by the DeepRet descriptor, retrieving 96\% of the true positives at a FP rate of $1.43 \times 10  ^{-6}$.

\begin{figure*} [ht!]
    \centering
   
     \begin{subfigure}[]
     \centering
     \includegraphics[width=0.4\linewidth]{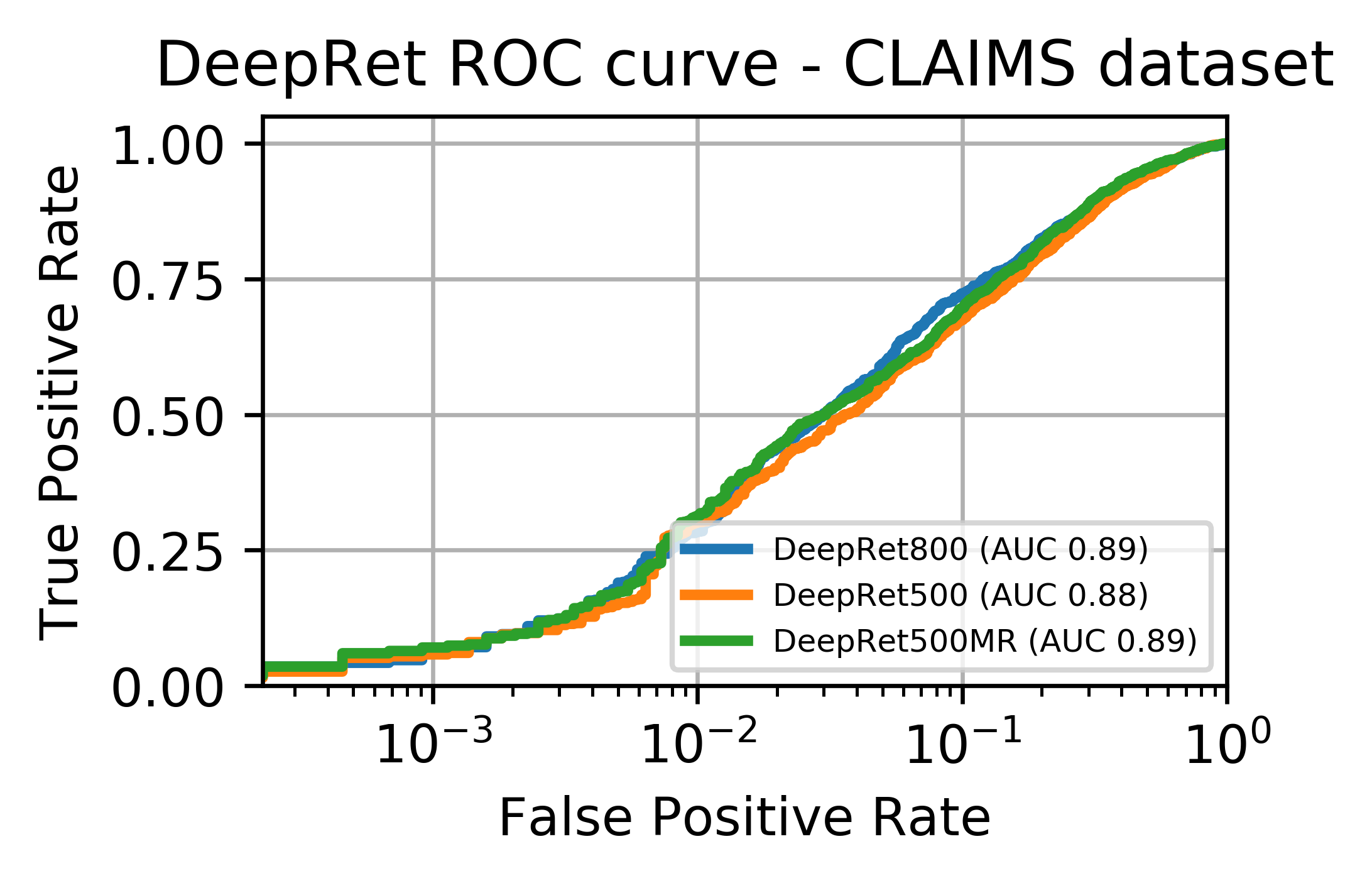}
     \end{subfigure}
        \begin{subfigure}[]
     \centering
     \includegraphics[width=0.4\linewidth]{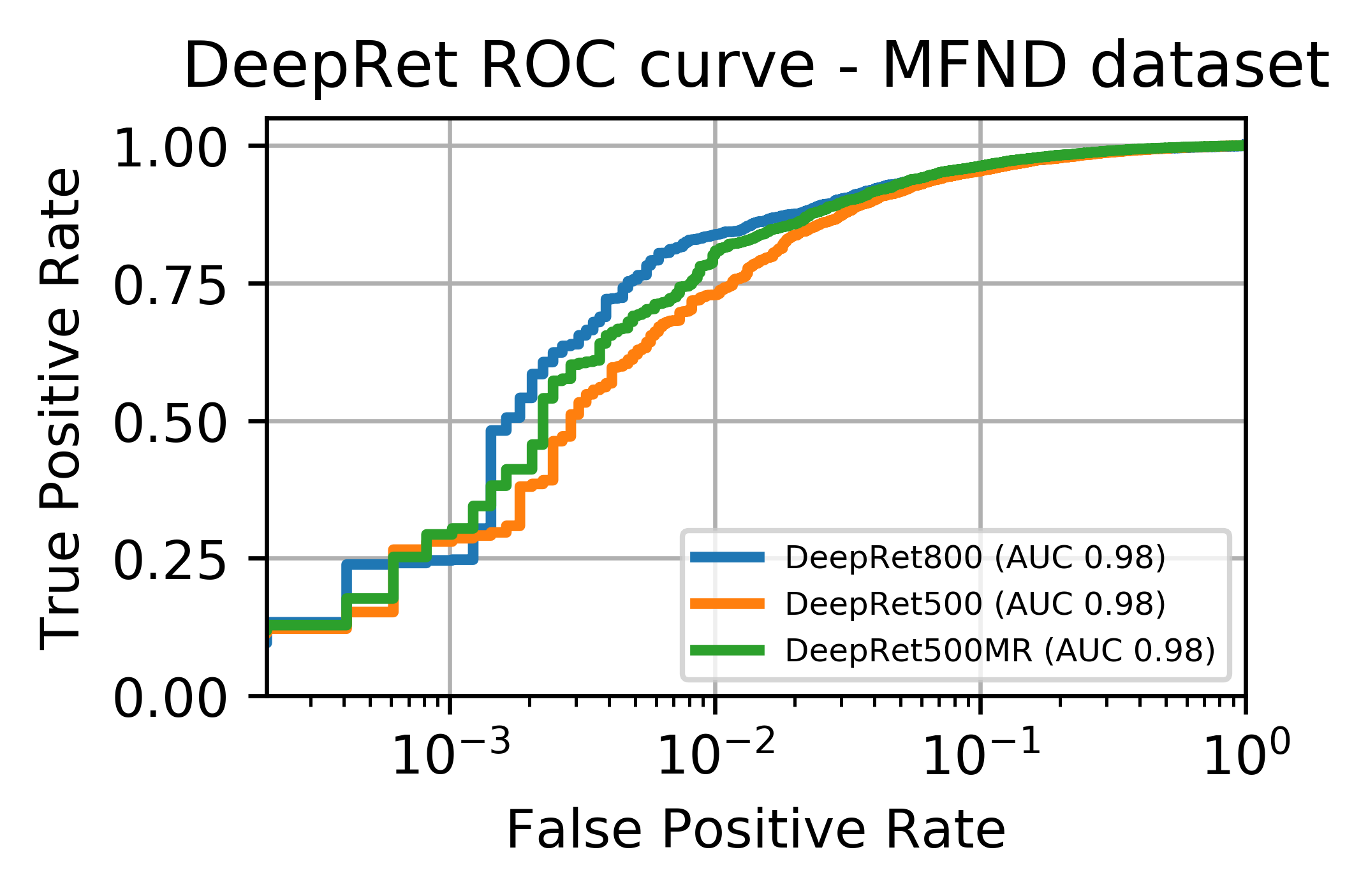}
     \end{subfigure}
     \\
          \begin{subfigure}[]
    
     \includegraphics[width=0.4\linewidth]{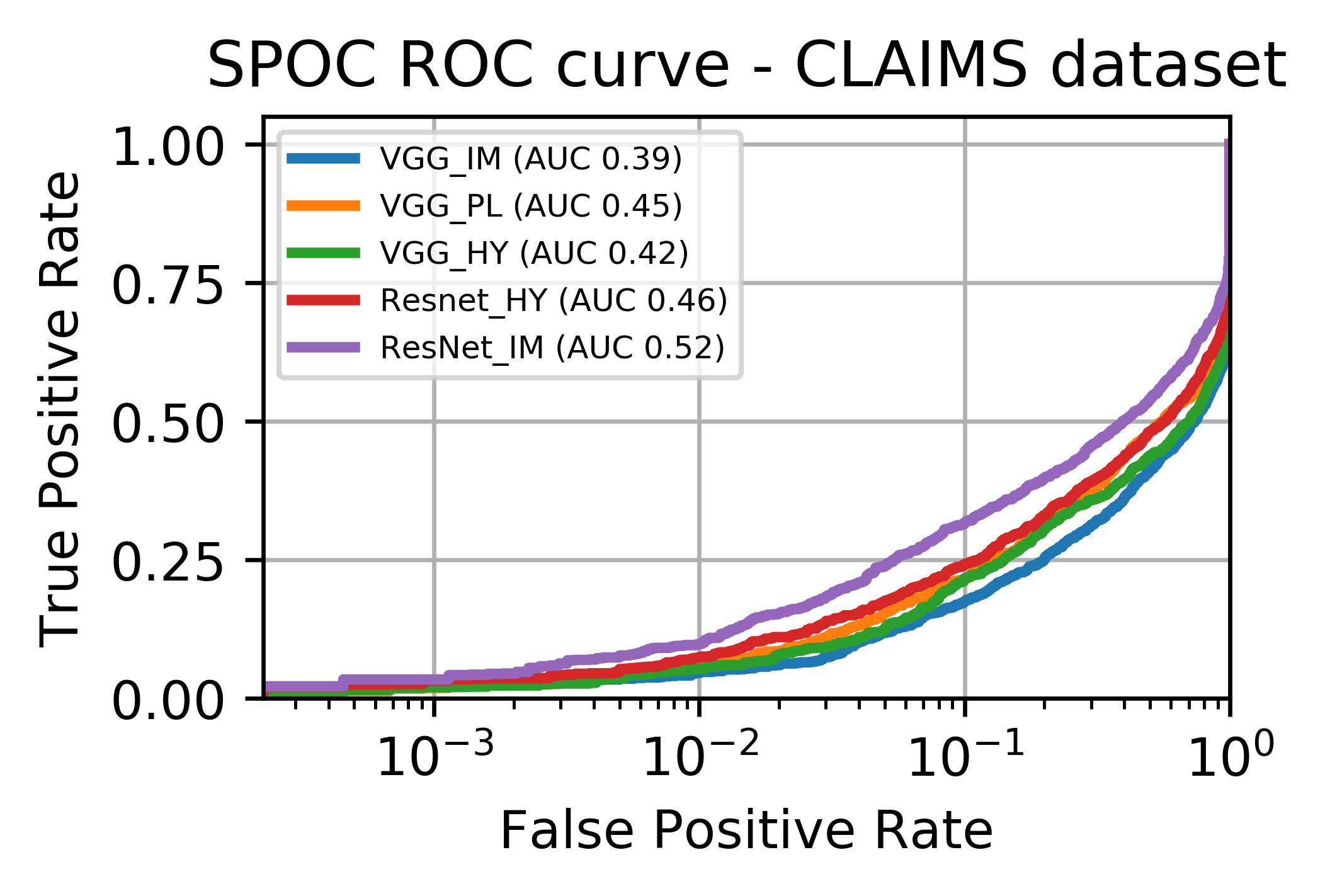}
     \end{subfigure}
          \begin{subfigure}[]
   
     \includegraphics[width=0.4\linewidth]{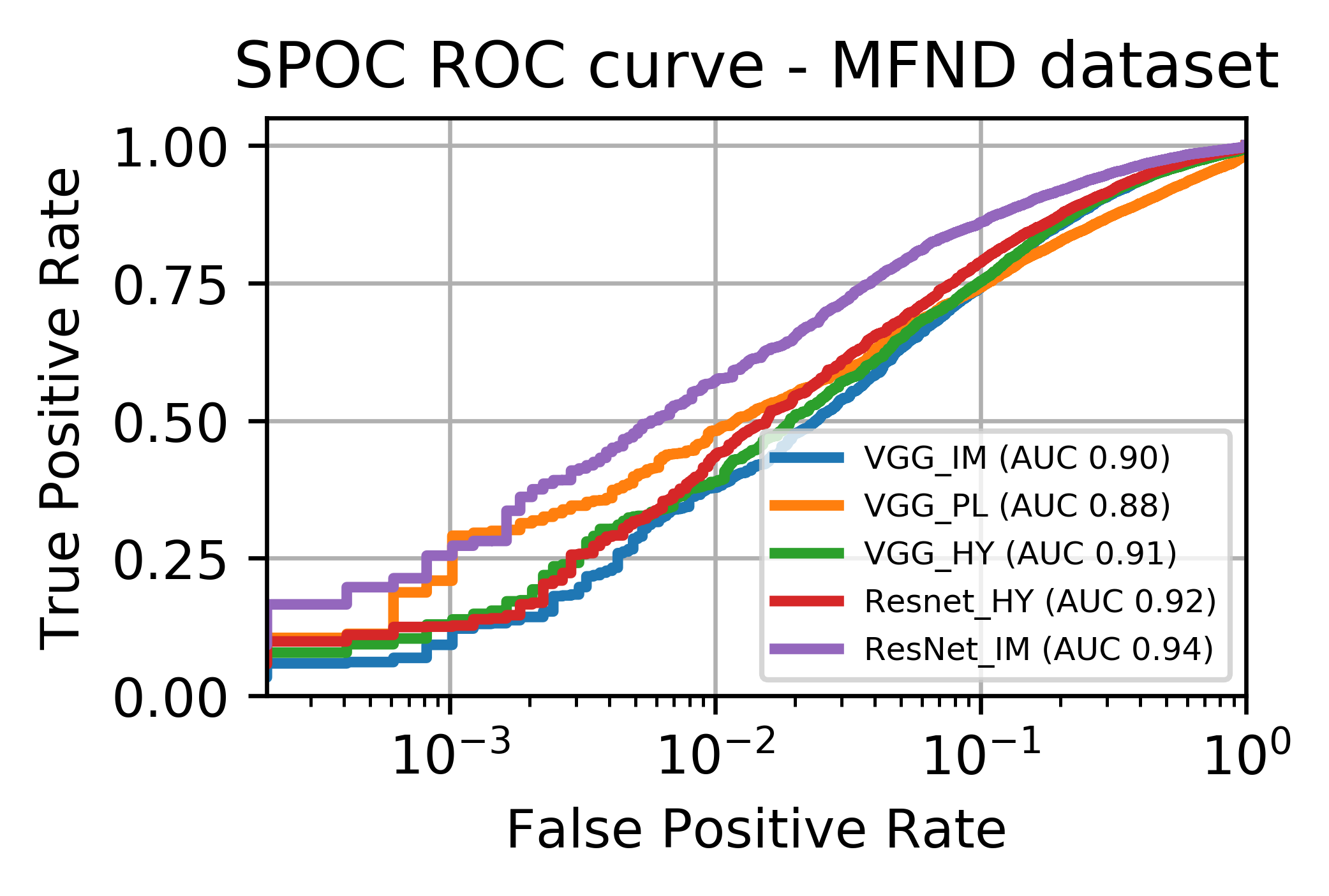}
     \end{subfigure}
     \\
          \begin{subfigure}[]
     
     \includegraphics[width=0.4\linewidth]{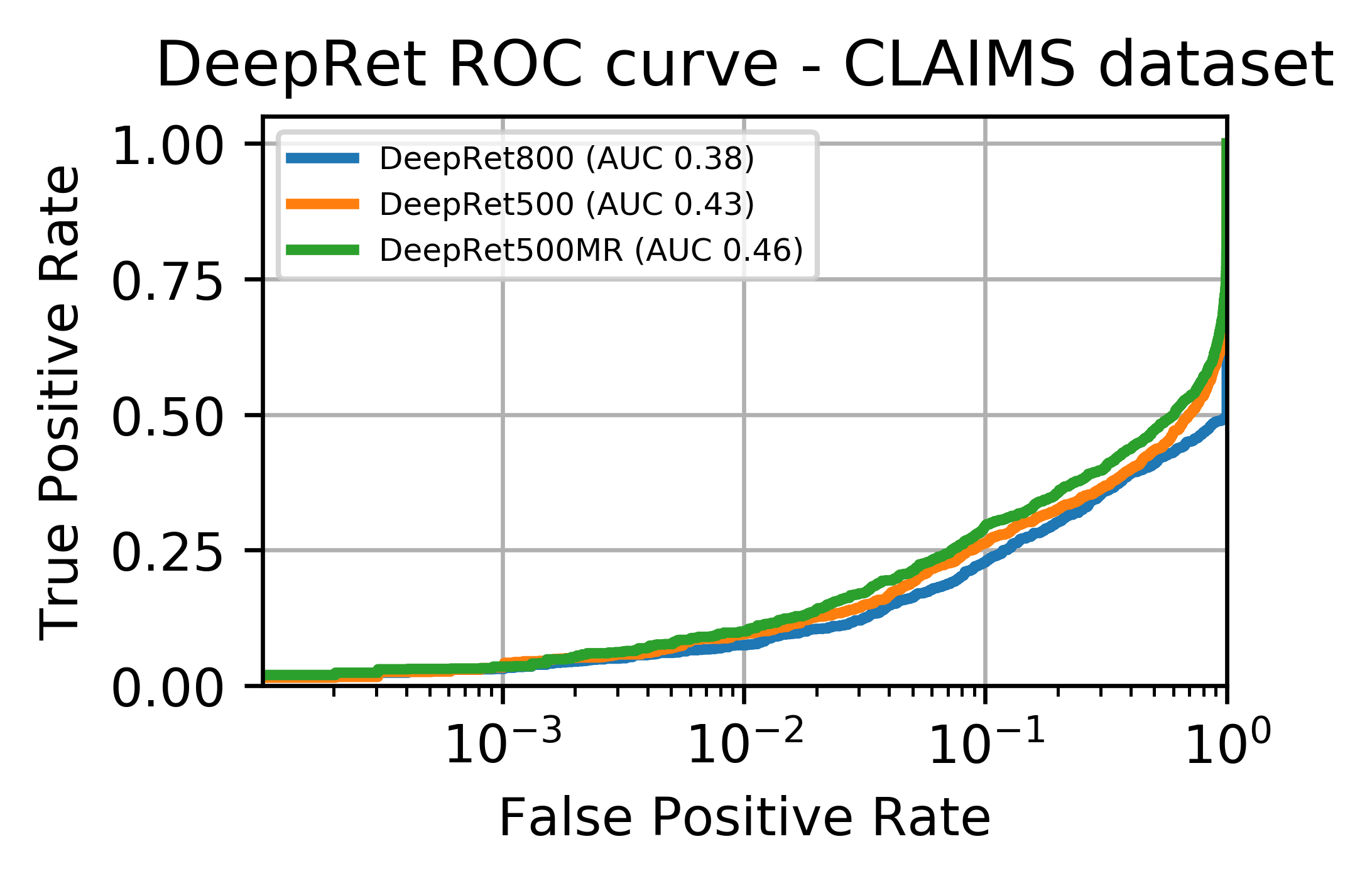}
     \end{subfigure}
          \begin{subfigure}[]
    
     \includegraphics[width=0.4\linewidth]{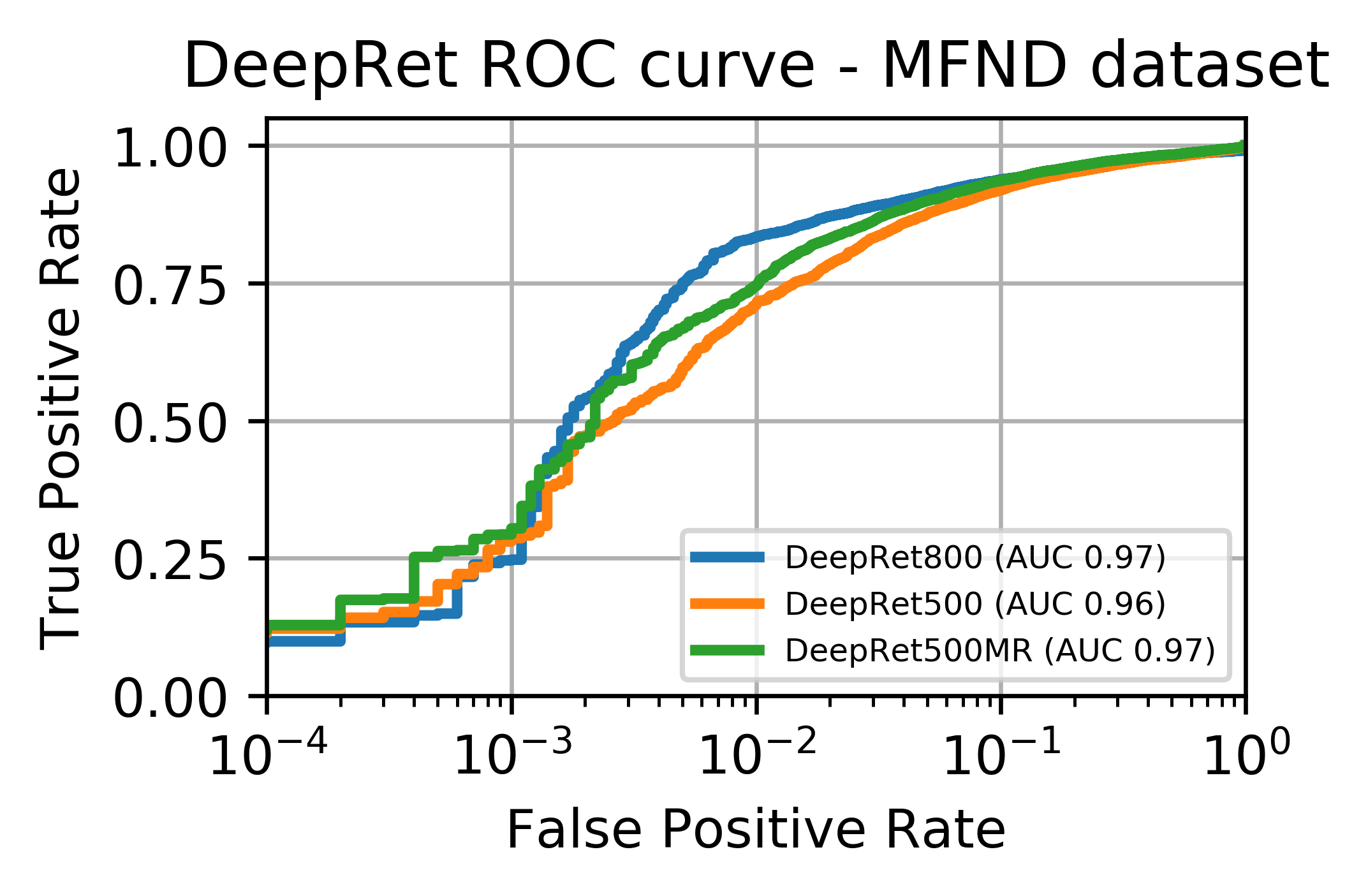}
     \end{subfigure}
     \\
          \begin{subfigure}[]
    
     \includegraphics[width=0.4\linewidth]{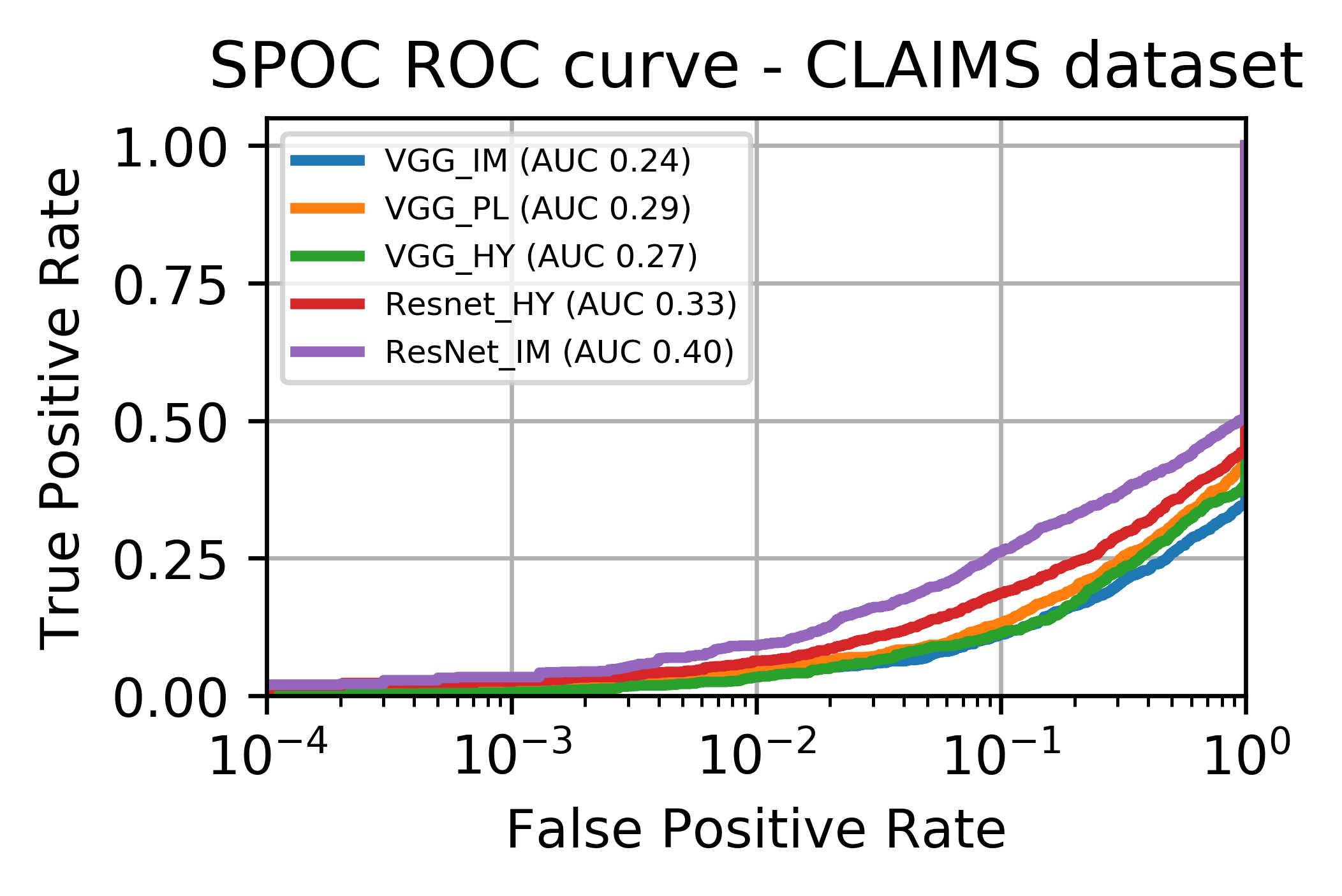}
     \end{subfigure}
          \begin{subfigure}[]
 
     \includegraphics[width=0.4\linewidth]{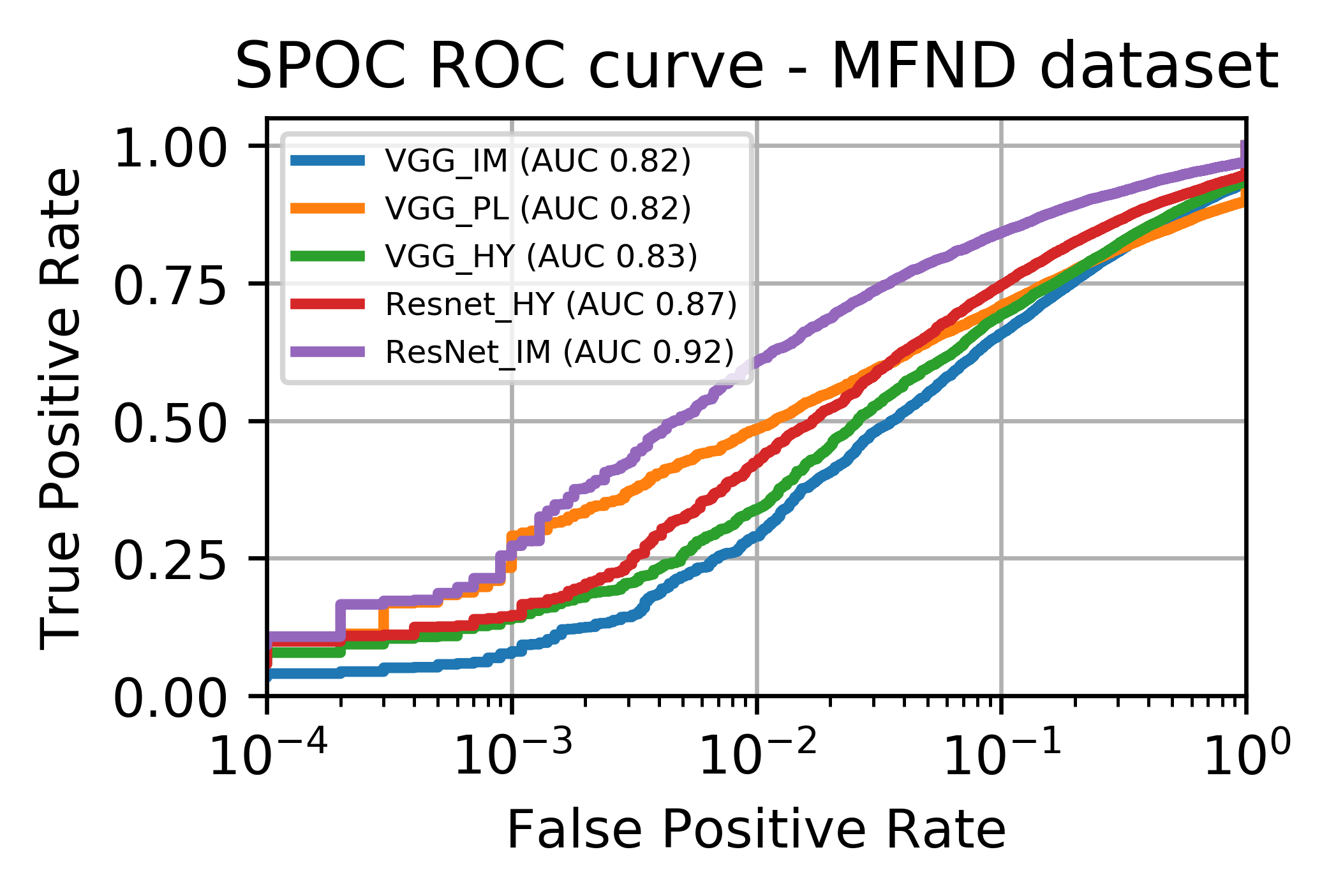}
     \end{subfigure}
     
  \caption{ROC curves for the Deep Retrieval descriptors for hard negative mining strategy \textit{hn1} (a-d) and \textit{hn2} (e-h) respectively. A logarithmic scale was used for the FP rate axis to highlight low values in the 0.01 -- 0.1 range. Since the NND pairs were extracted using a hard negative mining strategy, a 0.1 FP rate corresponds to a projected minimum FP of $1.25 \times 10  ^{-6}$ and $1.43 \times 10  ^{-6}$ for the CLAIMS and MFND datasets, respectively. }
  \label{fig:ROCcurves} 
\end{figure*}

\begin{figure} [ht!]
    \centering
    
    \begin{subfigure}[]
     \centering
     \includegraphics[width=0.4\linewidth]{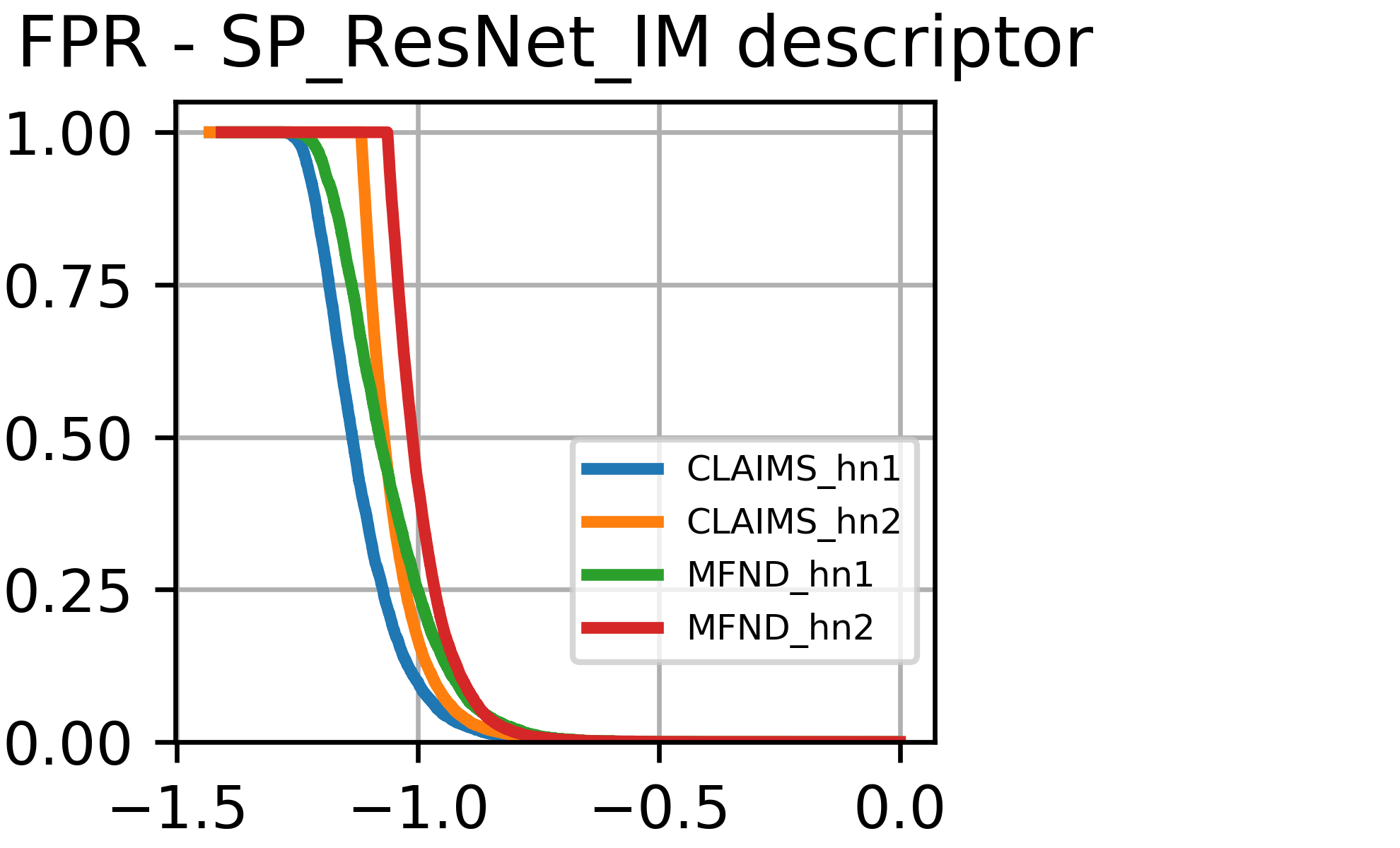}
     \end{subfigure}
      \begin{subfigure}[]
     \centering
     \includegraphics[width=0.4\linewidth]{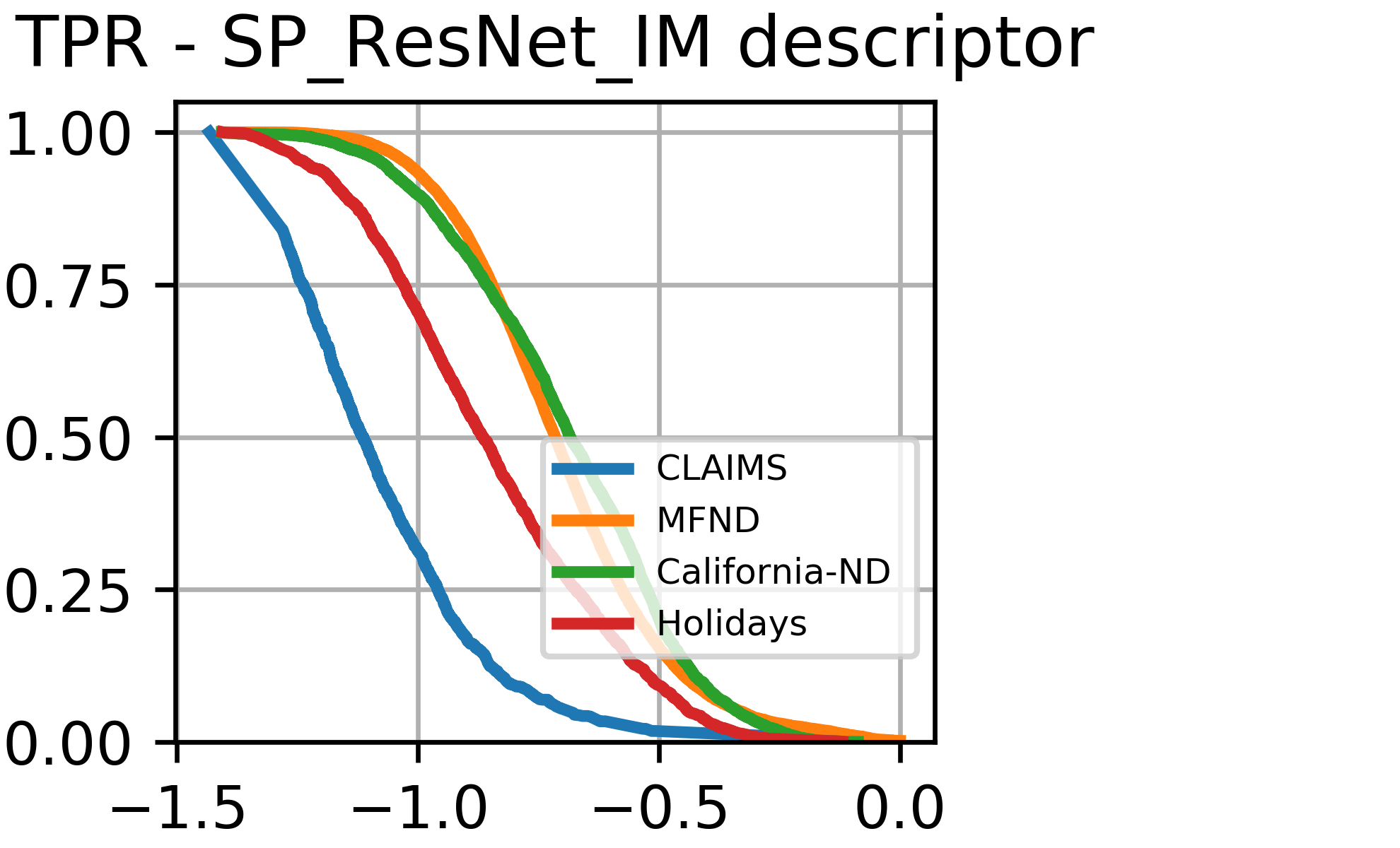}
     \end{subfigure}
     \\
    \begin{subfigure}[]
     \centering
        \includegraphics[width=0.4\linewidth]{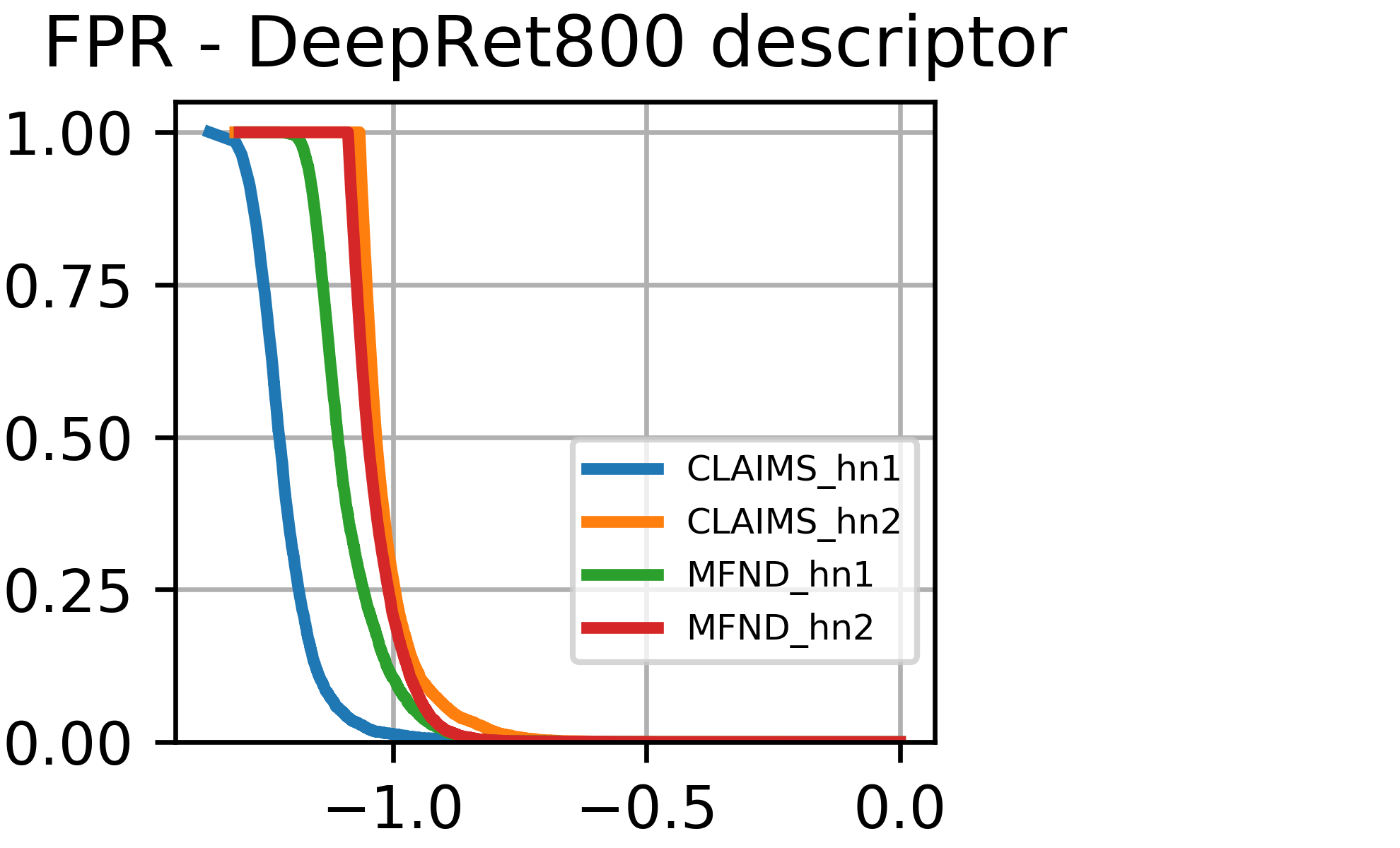}
     \end{subfigure}
    \begin{subfigure}[]
     \centering
      \includegraphics[width=0.4\linewidth]{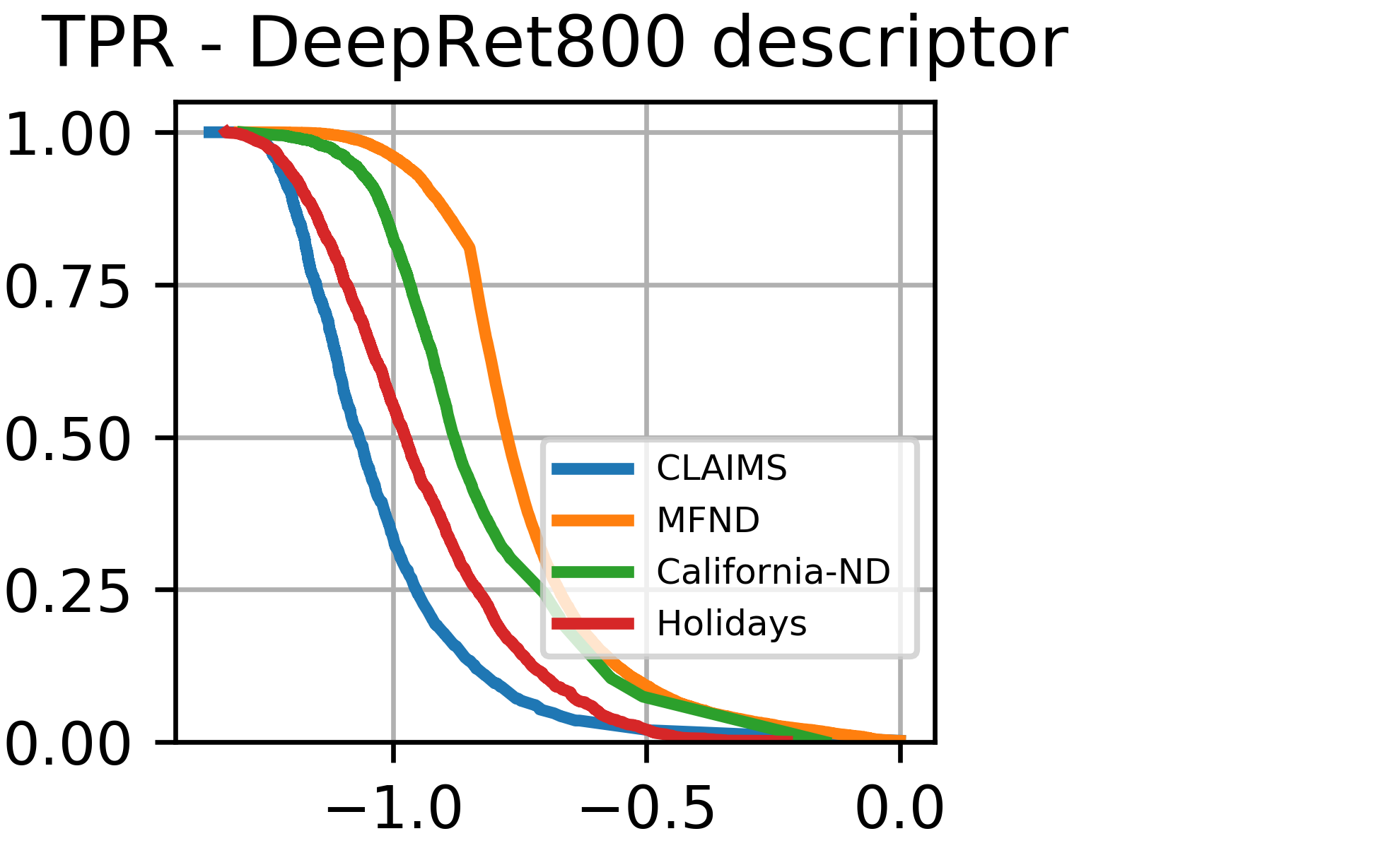}
     \end{subfigure}
  
  \caption{Comparison of TP rate and FP rates across different datasets for selected descriptors: SP\_ResNet101\_ImageNet (a-b), and DeepRet800 (c-d). For FPs, pairs selected with both hard negative mining strategies \textit{hn1} and \textit{hn2} are separately plotted.  }
  \label{fig:TPR-FPR} 
\end{figure}

\subsection{Dataset comparison}
\label{datasetcomp}
In order to better highlight differences between the datasets, we computed the false FP and TP rate w.r.t. the distance threshold for each dataset and for each of the two best performing descriptors, as detailed in Fig. \ref{fig:TPR-FPR}. 

Not surprisingly, INDs are more easily detected than NINDs. The CLAIMS dataset contains the most challenging near duplicates, closely followed by the Holidays dataset. Given the annotation procedure followed for the MFND benchmark, it is possible that the NIND examples are skewed towards examples that are more easily detected using the present descriptors, and future experiments will likely find new examples. Examples of ND pairs that were poorly scored are reported in Fig. \ref{fig:exampleCLAIMS}; empirically, large changes in viewpoint appear among the most challenging differences.

We also compared FP rates on the MFND and CLAIMS datasets with the two hard negative mining strategies. For \textit{hn1}, MFND appears to be more difficult than CLAIMS, whereas for \textit{hn2} the two datasets are  {quite} comparable for both descriptors. Given a random query image, it is more likely to find a similar image for MFND than CLAIMS, but CLAIMS contains larger clusters of images that are both semantically and visually similar, as is likely going to be the case for any dataset that comes from a focused domain. Examples of hard negatives (\textit{hn2}) for both datasets are reported in Fig. \ref{fig:exampleCLAIMS}.

\begin{figure*}[ht!]
  \centering
  \begin{subfigure}[]
     \centering
        \includegraphics[width=0.3\linewidth]{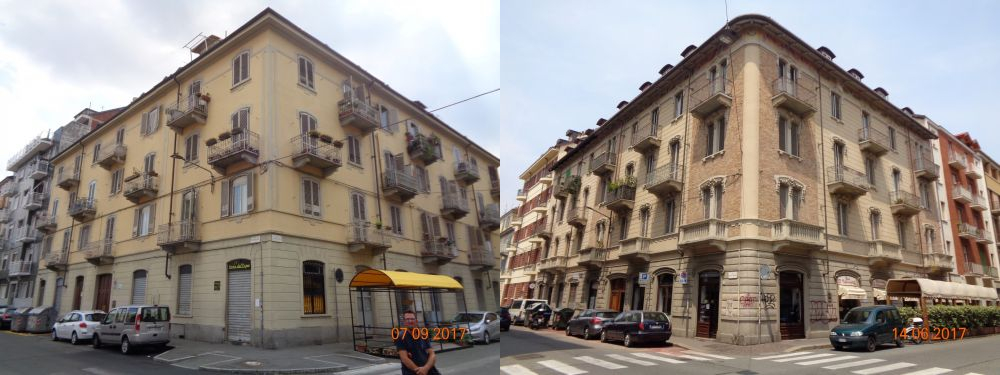}
   \end{subfigure}
    \begin{subfigure}[]
     \centering
        \includegraphics[width=0.3\linewidth]{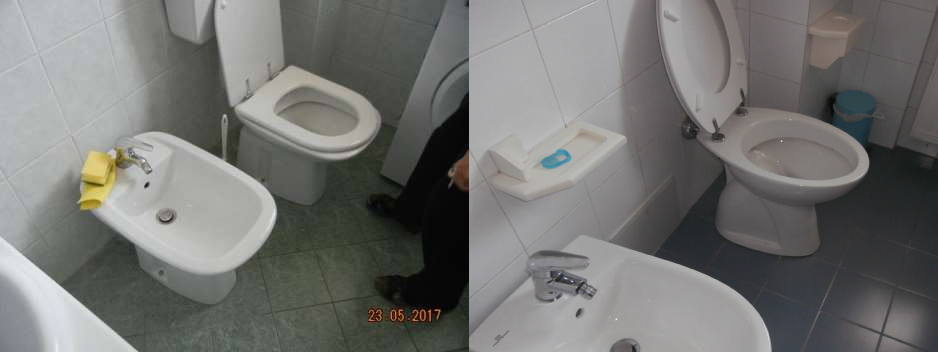}
   \end{subfigure}
     \begin{subfigure}[]
     \centering
        \includegraphics[width=0.3\linewidth]{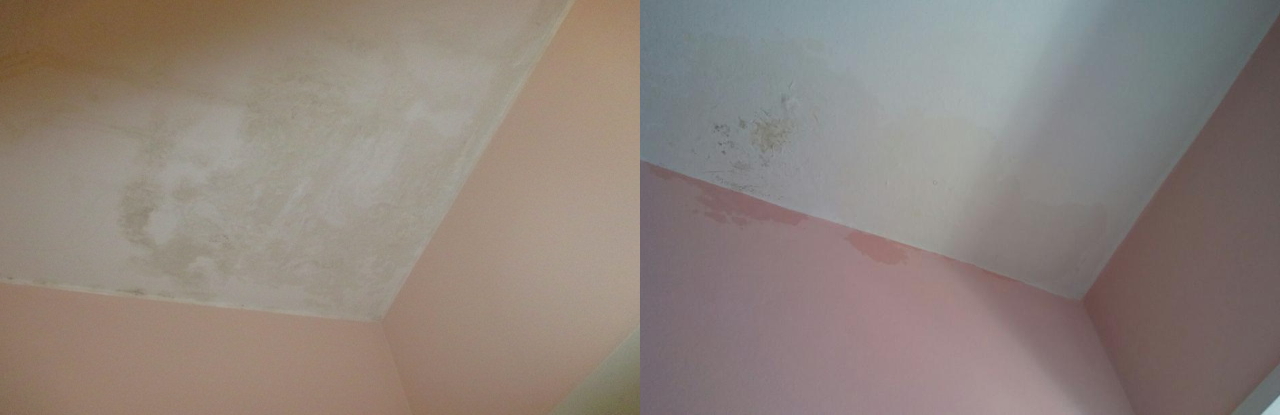}
   \end{subfigure}
   \\
   \begin{subfigure}[]
     \centering
        \includegraphics[width=0.3\linewidth]{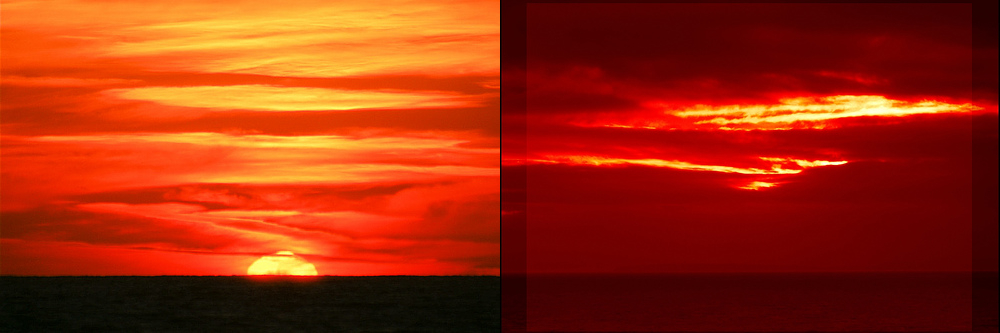}
   \end{subfigure}
    \begin{subfigure}[]
     \centering
        \includegraphics[width=0.3\linewidth]{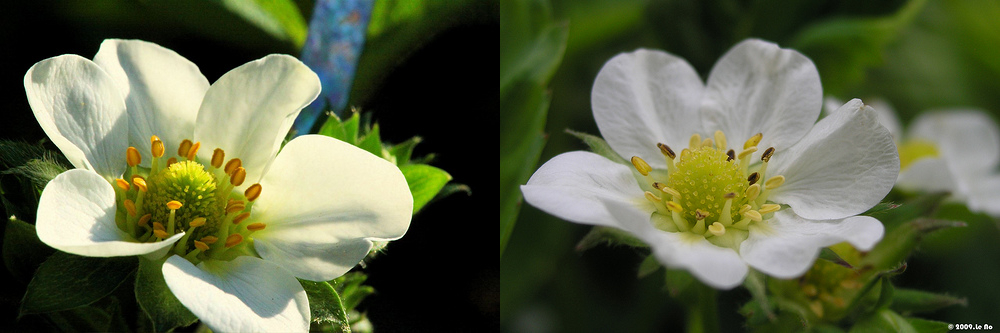}
   \end{subfigure}
     \begin{subfigure}[]
     \centering
        \includegraphics[width=0.3\linewidth]{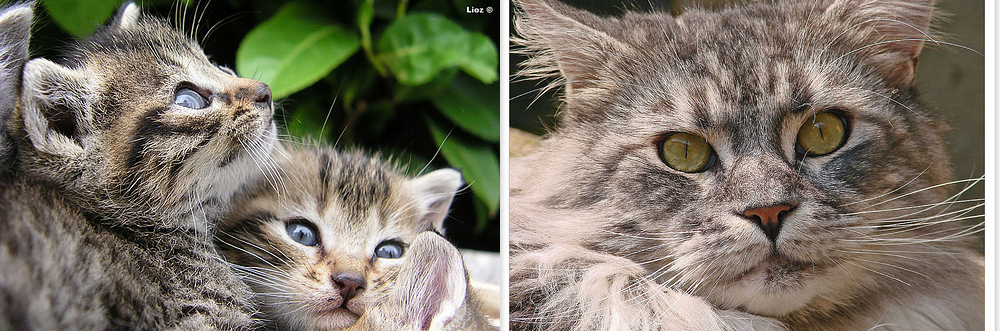}
   \end{subfigure}
   \\
   \begin{subfigure}[]
     \centering
        \includegraphics[width=0.3\linewidth]{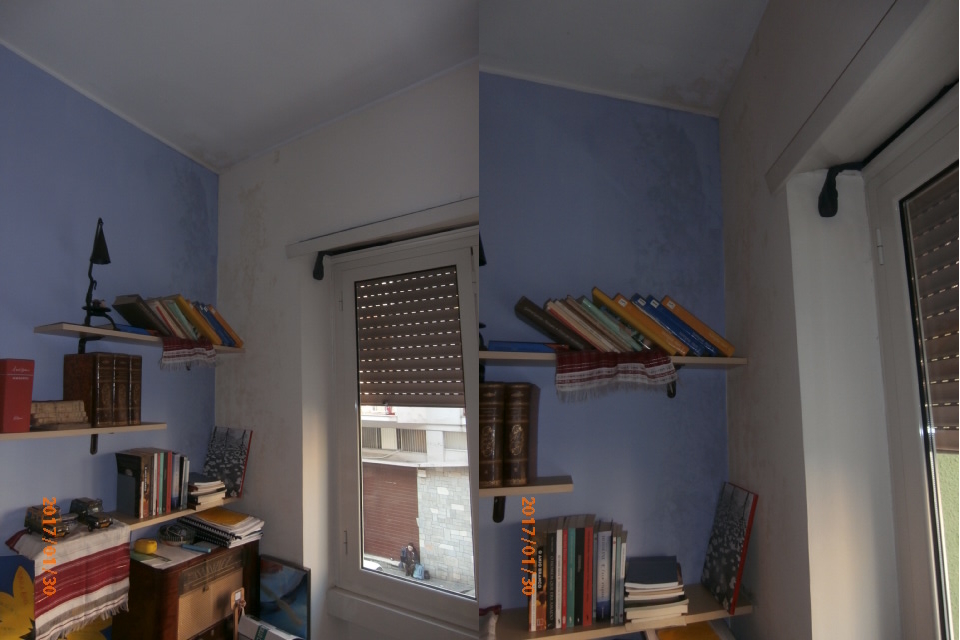}
   \end{subfigure}
    \begin{subfigure}[]
     \centering
        \includegraphics[width=0.3\linewidth]{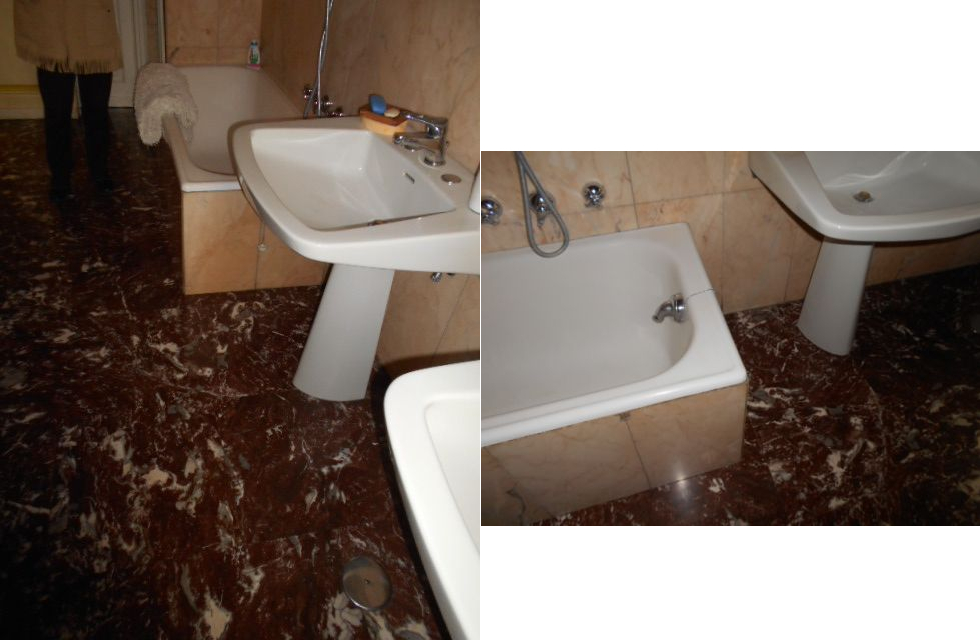}
   \end{subfigure}
     \begin{subfigure}[]
     \centering
        \includegraphics[width=0.3\linewidth]{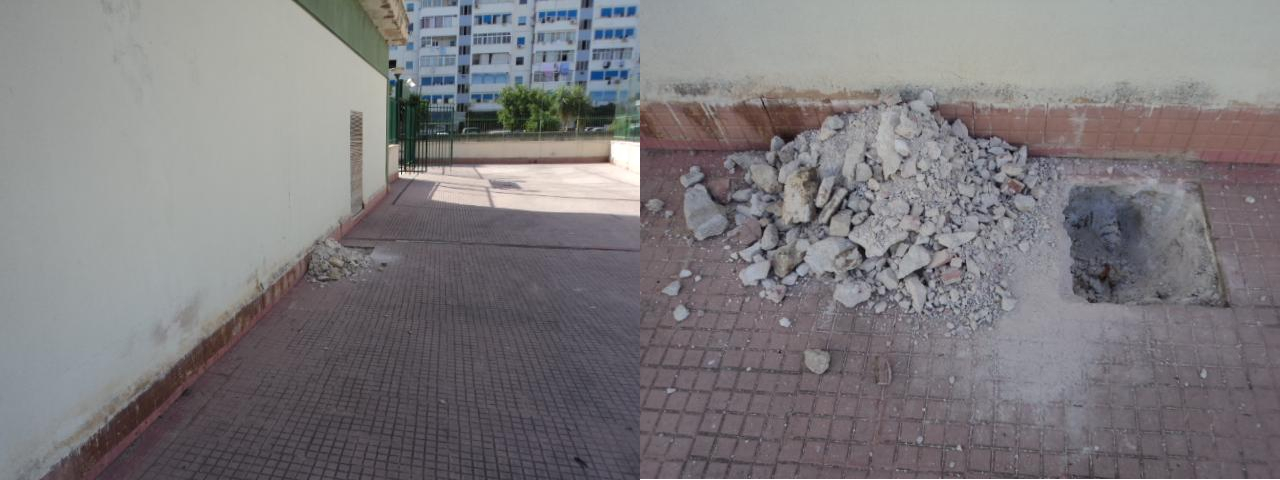}
   \end{subfigure}
   \\

\caption{ {Examples of challenging image pairs for unsupervised near-duplicate detection. Examples (a-e) are challenging negative examples which, despite high semantic and visual similarity, are not near duplicates. Examples derived from CLAIMS (a-c) are related to image types that are particularly common this collection, whereas examples from MFND (d-e) are mostly of subjects which are particularly popular on Internet, such as sunsets and cats. Examples (g-h) are challenging near-duplicates from the CLAIMS dataset which were given low similarity scores by all descriptors; common patterns that are difficult to detect include drastic changes in viewpoint, or one of the two images in the pair represents a detail of a larger scene.}}
\label{fig:exampleCLAIMS}
\end{figure*}

\subsection{Query performance analysis}
\label{perfquery}

\begin{figure*}[ht!] 
    \centering
     \begin{subfigure}[]
     \centering
        \includegraphics[width=0.4\linewidth]{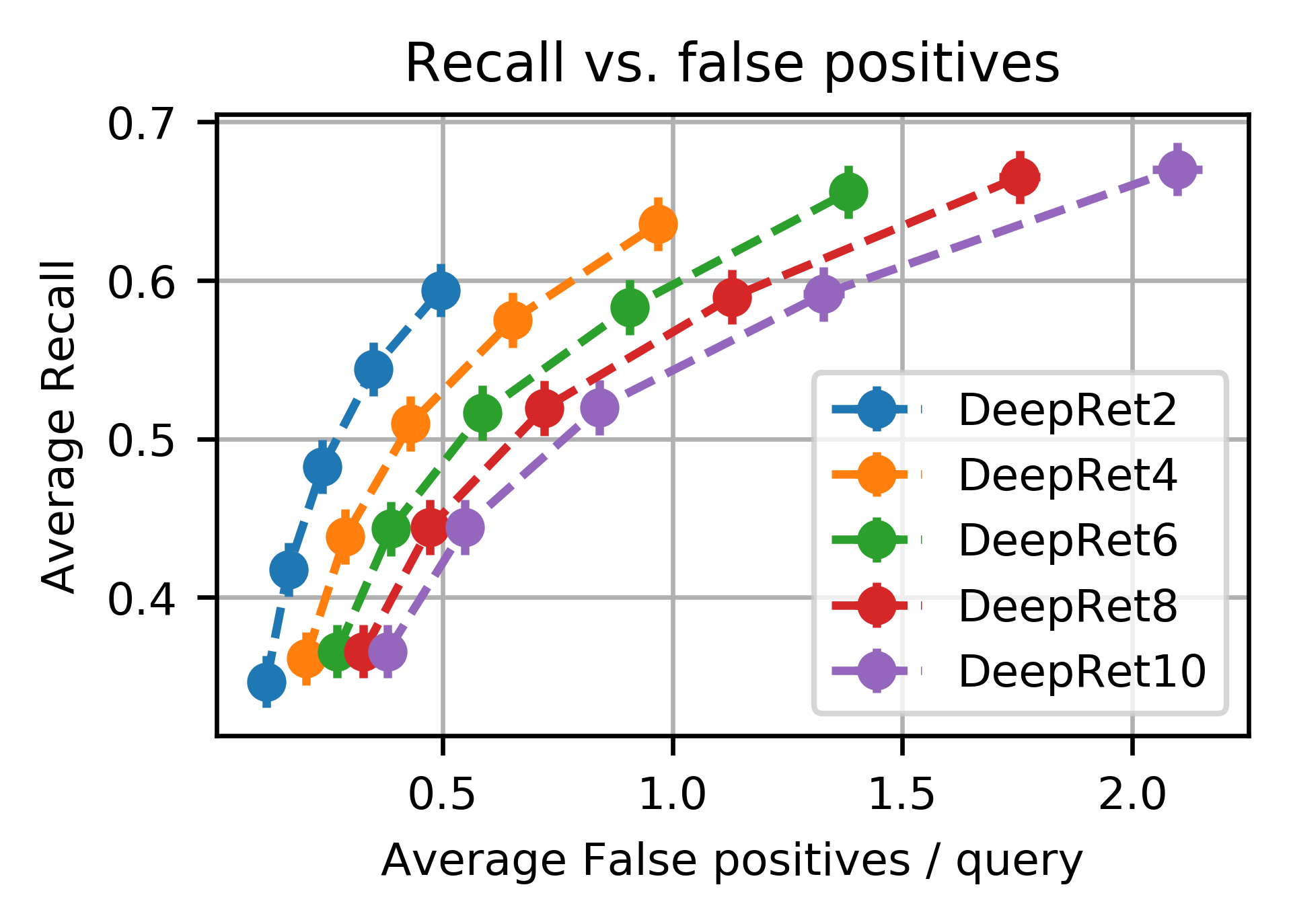}
   \end{subfigure}
    \begin{subfigure}[]
     \centering
        \includegraphics[width=0.4\linewidth]{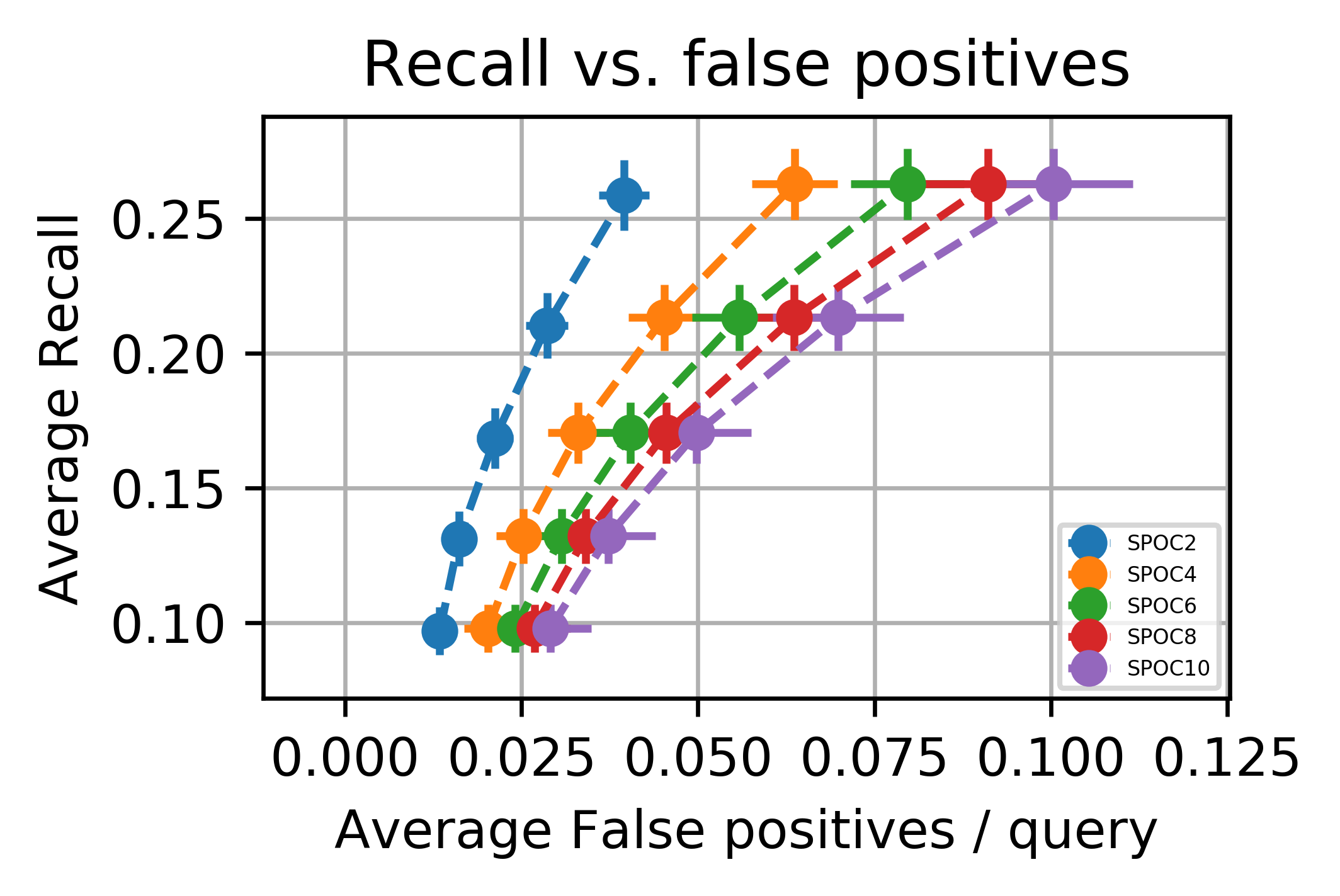}
   \end{subfigure}
   \\
     \begin{subfigure}[]
     \centering
        \includegraphics[width=0.4\linewidth]{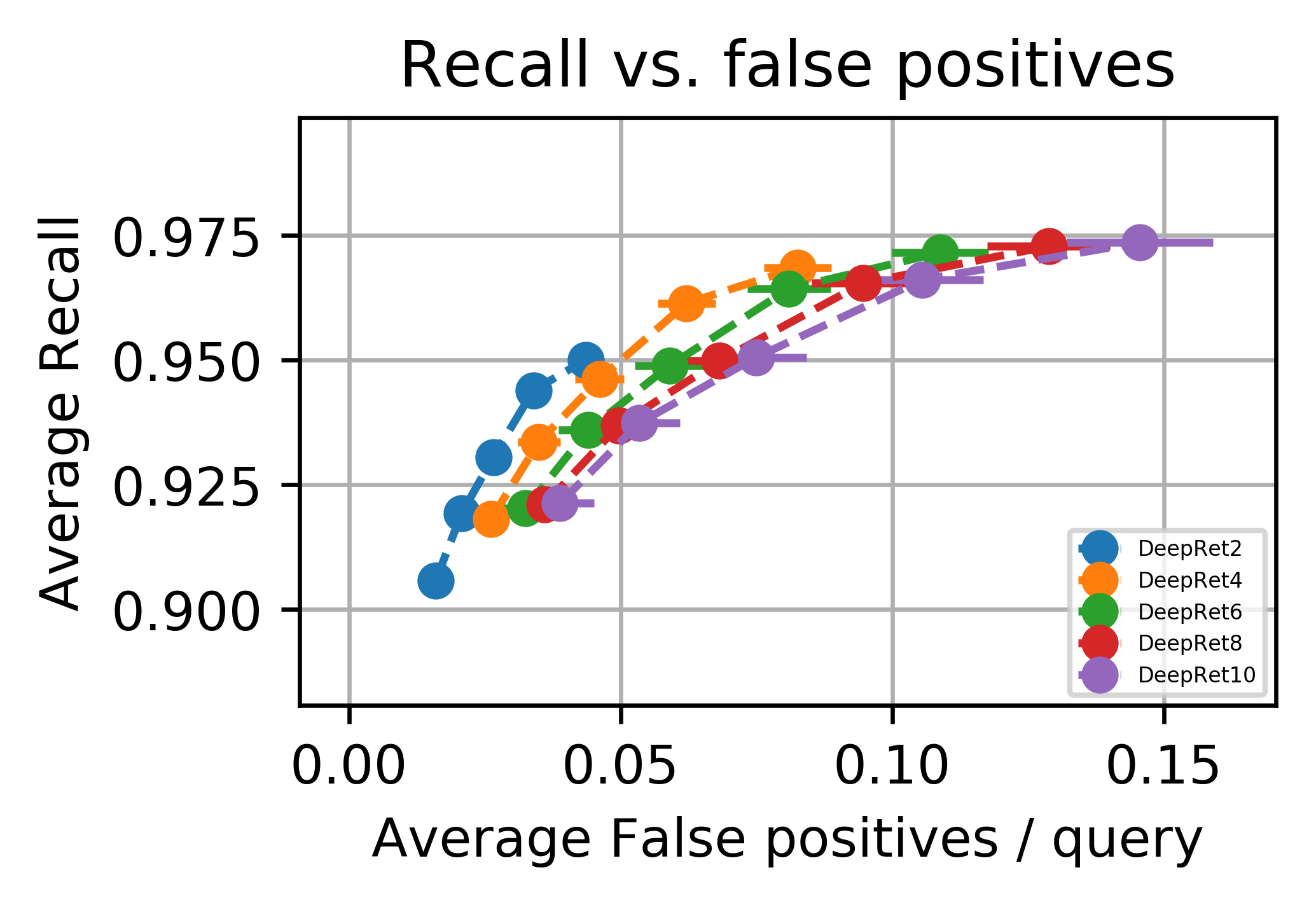}
   \end{subfigure}
  \begin{subfigure}[]
     \centering
        \includegraphics[width=0.4\linewidth]{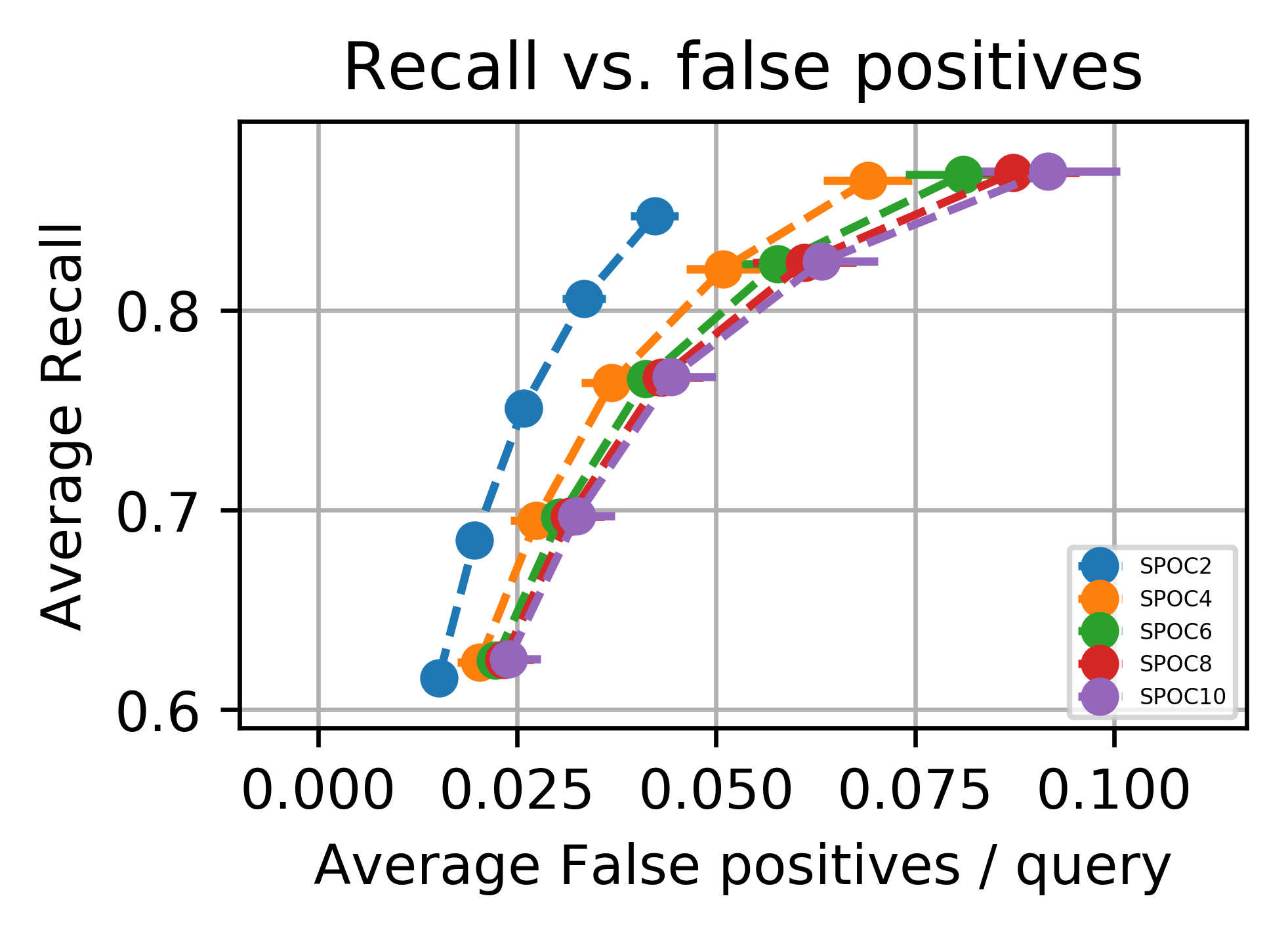}
   \end{subfigure}
   \\
   \centering
     \begin{subfigure}[]
     \centering
        \includegraphics[width=0.4\linewidth]{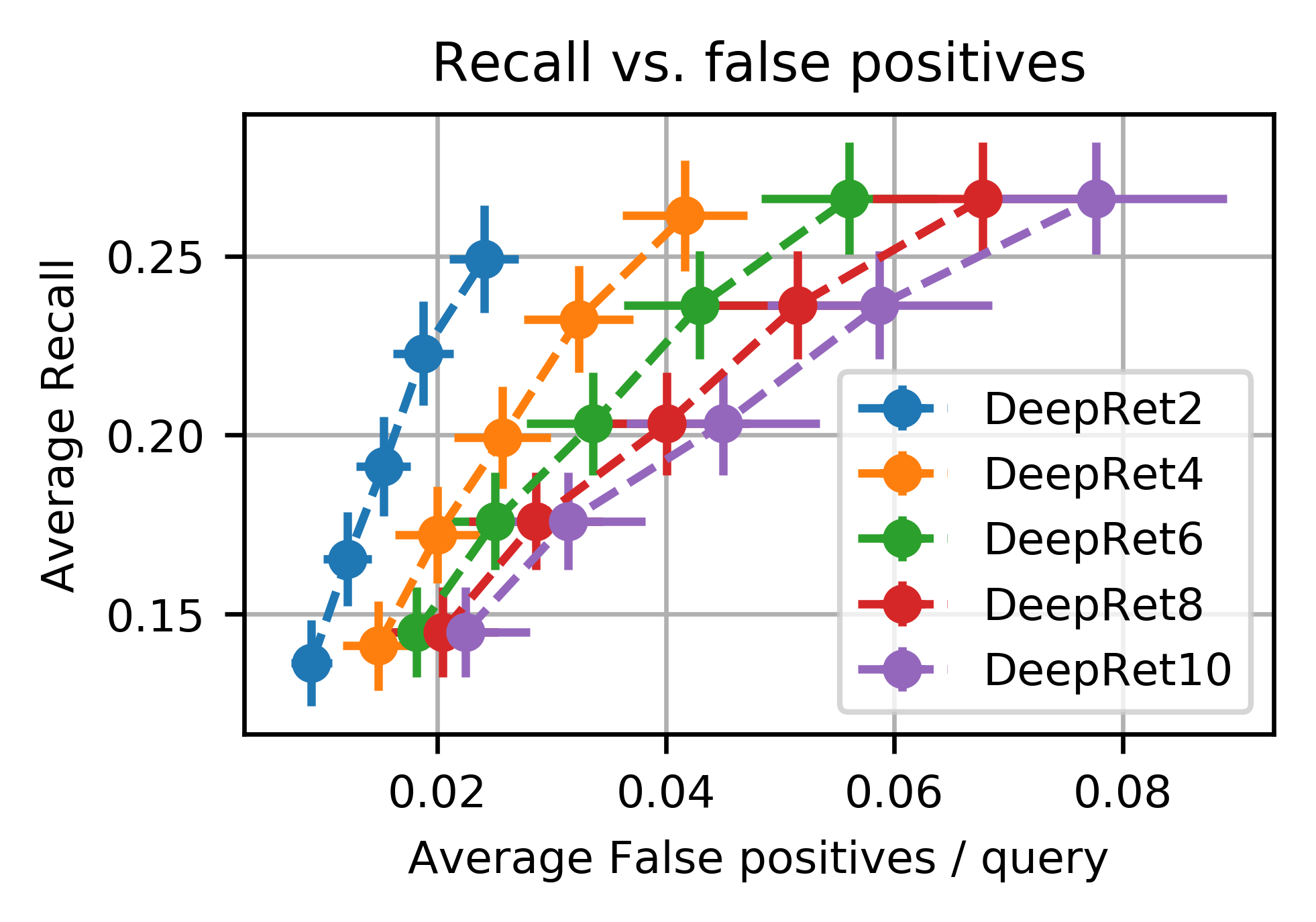}
   \end{subfigure}
    \begin{subfigure}[]
     \centering
        \includegraphics[width=0.4\linewidth]{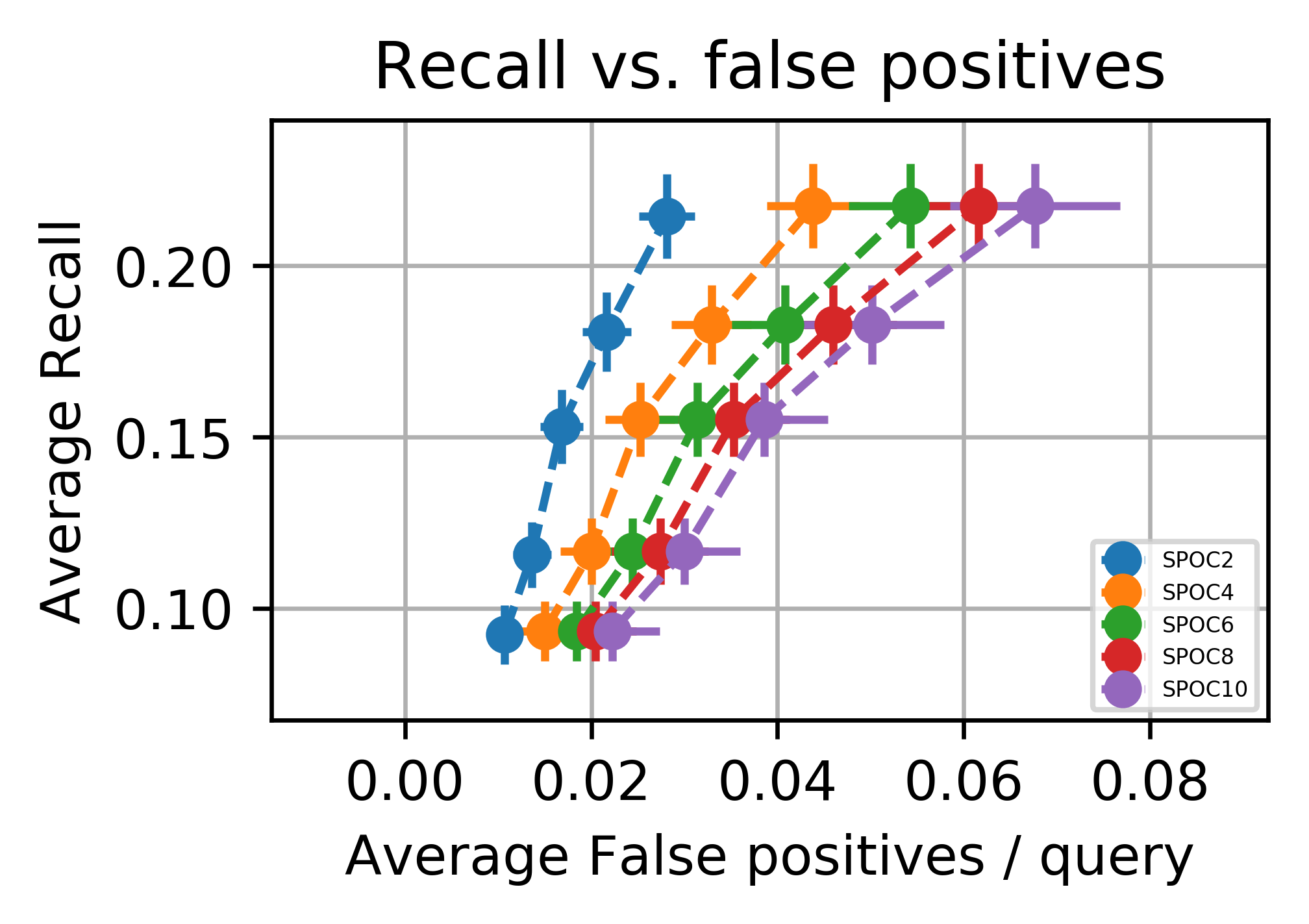}
   \end{subfigure}
   \\
     \begin{subfigure}[]
     \centering
        \includegraphics[width=0.4\linewidth]{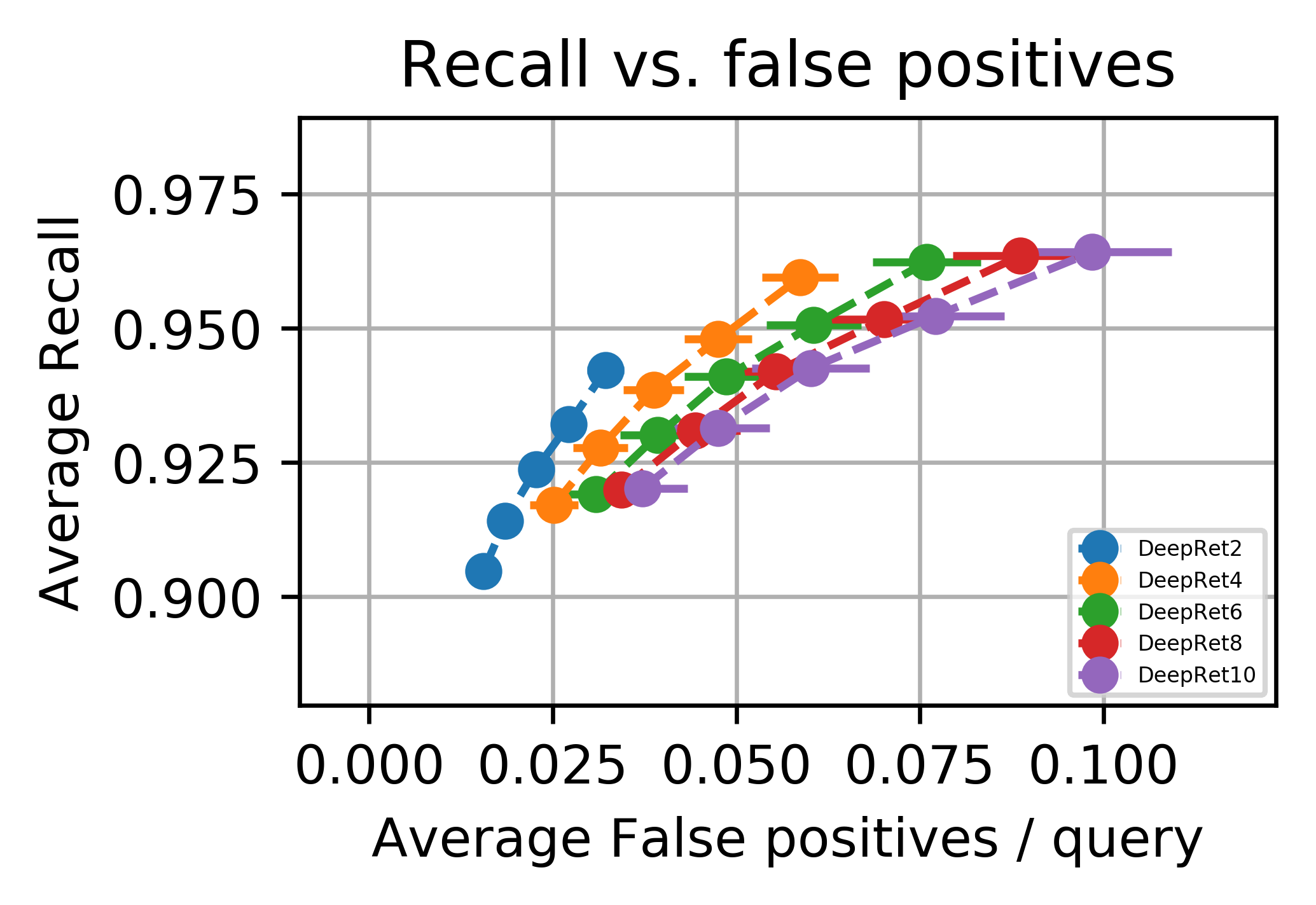}
   \end{subfigure}
     \begin{subfigure}[]
     \centering
        \includegraphics[width=0.4\linewidth]{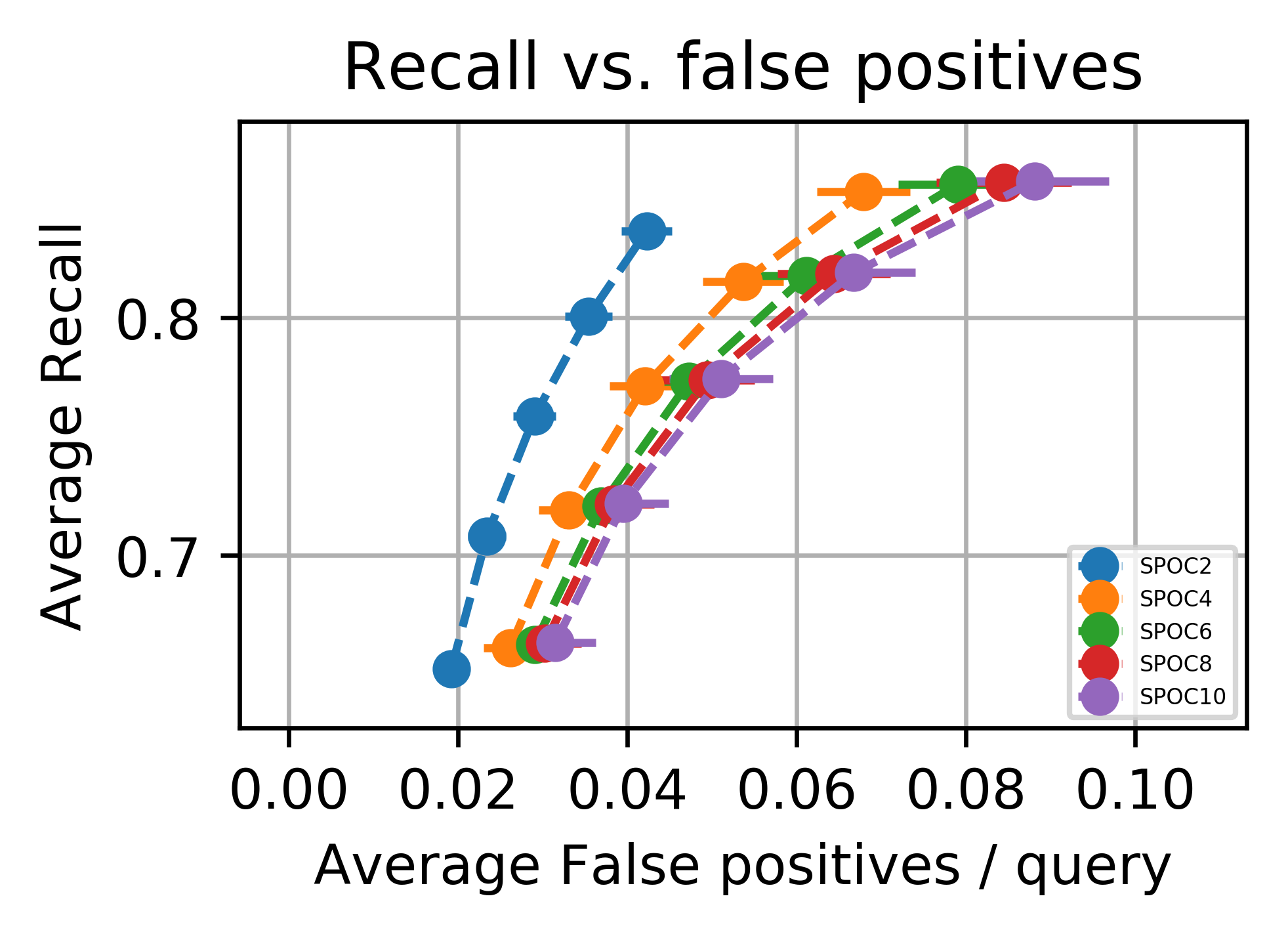}
   \end{subfigure}

  \caption{Average recall vs. FPs/query for the CLAIMS (a-b, e-f) and MFND dataset (c-d, g-h), with thresholds calculated using hard negative mining \textit{hn1} (top row, a-d) and \textit{hn2} (bottow row, e-h). Performance is measured at fixed thresholds (dots in the above curves), bars indicate the standard error. The maximum number of images retrieved by each query is limited to $K=2,4,6,8,10$, results are plotted as separate curves. }
  \label{fig:FROC} 
\end{figure*}

In this section, the two best performing descriptors at ROC analysis were compared: DeepRet800 and SP\_ResNet152IN, using the experimental setup detailed in Section \ref{secFROCanalysis}.  

We performed threshold-limited queries at thresholds $T$ corresponding to a FP rate in the [0.01 -- 0.1] range, and a maximum number of results/query $K$ between 2 and 10. 

The results are plotted in Fig. \ref{fig:FROC}. Since we are using the same dataset for both hard negative mining and estimating query performance, it can be easily shown from Eq. \ref{eqFPq} that a FP rate of 0.1 should correspond to an average number of FPs/query of 0.1 as well. Estimates based on \textit{hn1} have larger deviations from expected values, especially for the DeepRet descriptor on the CLAIMS dataset, which is a 20x larger than expected.  {For \textit{hn2}, a}ctual measured FPs/query are usually slightly lower than predicted.  {Since we limit the maximum number of images retrieved by each query, this factor may explain the discrepancy, which is higher for the CLAIMS dataset where images are more tightly clustered in feature space. It should also be noticed that in Eq. \ref{eqFPq} the specificity depends only on the threshold, and not on the query image; our experiments, however, suggest that this does not hold true in practice, and that certain types of images are more prone to false positives.}

\section{Discussion}
\label{Conclusion}

\subsection{Dataset and methodology}
Our contributions are a crucial step towards a principled evaluation methodology through which estimating the specificity of unsupervised detection in arbitrarily sized datasets is reduced to the simpler problem of binary classification of ND vs. NND pairs; a tractable number of NND pairs can be extracted through hard negative mining strategies. In the simplest implementation, hard negatives can be mined by finding the nearest neighbors in the dataset, using exact or approximate search depending on the size. 

We established the first benchmark for unsupervised NIND detection, an extension of the MFND benchmark comprising  {more than 20,000 pairs} of INDs or NINDs.  We followed a semi-automatic procedure that potentially could locate almost all pairs of NDs in the dataset \citep{connor2015identification}. In our experiments,  {occasionally hard negatives mined may still contain a small percentage (1-2\%) of NDs}: hence, annotation of the MFND benchmark should be regarded as an ongoing process, that will grow as new descriptors will be tested. For comparison, in an initial experiment performed before extending the dataset \citep{connor2015identification}, roughly 8\% of the hard negative mined were either NIND or IND pairs. 
 {Our experimental comparison on state-of-the-art descriptors suggests that, when compared with a real-life dataset representative of a fraud detection application, MFND is a surprisingly realistic benchmark for estimating the specificity. On the contrary, NIND samples in MFND are on average slightly easier to detect than other datasets, albeit the difference is much reduced compared to IND samples.}
The presented methodology builds upon previous results from \cite{connor2016quantifying} on IND detection; we proved that the accuracy of the estimated specificity crucially depends on choosing a proper hard negative mining strategy.  {We provide two additional contributions that strengthen the adoption of this methodology: first, we show analytically that the AUC of the ROC obtained on the hard negative subset is an upper bound of the true AUC. Secondly, we show experimentally that, starting from the experimental ROC, we are able to predict quite accurately the false positive rate per query, which is an indirect proof that the ROC is indeed a good approximation of the true curve. For this experimental comparison, we used the same dataset for hard negative mining and performance evaluation, but in principle, it would be more convenient to perform the hard negative mining on a smaller dataset.  Future work is needed to determine whether the false positive rate can be extrapolated to a larger dataset.}

An alternative, more intuitive, figure of merit would be the average recall and FPs/query as a function of the distance threshold $t$. This curve is less practical to use as it depends on the size of the dataset and, being unbounded, defining summary performance measures such as the Area under the ROC curve is not straightforward. It closely resembles the Free-Response Receiver Operating Characteristics (FROC), an extension of ROC analysis used for many diagnostic tasks where the observer (human or machine) can identify the location of an arbitrary number of potential abnormalities, as opposed to the binary prediction task of determining whether an abnormality is present or not \citep{petrick2013evaluation}. In that context, alternatives to the AUC have been proposed and could be extended to our use case. 
\subsection{Performance comparison}
To the best of our knowledge, this the first attempt to evaluate deep learning descriptors on unsupervised discovery of non-identical near duplicates. 

\citet{connor2016quantifying} argued that global descriptors are sufficient for IND detection. Our experience on the GIST descriptor, which obtained the highest performance in  {the} previous comparison, suggest {s} that CNN-based descriptors offer significant advantages also in this case, and compare favorably in terms of execution time.

 {We have included in our comparison three widely used architecture: SPoC, R-MAC and DeepRetrieval. Note that the DeepRetrieval architecture includes region pooling (like R-MAC), but unlike other descriptors the features are fine-tuned on the Landmarks dataset for the retrieval task using a Siamese network.}
Confirming previous results on instance-level image retrieval benchmarks, nicely summarized by \citet{zheng2017sift}, our experimental results overall favor the choice of fine-tuning the representation for retrieval, as opposed to using off-the-shelf features trained using classification loss \citep{gordo2016deep}. The actual performance gap, however, strongly depends on factors related to both the network architecture, the chosen trade-off between specificity and sensitivity, and the underlying dataset structure.

The Holidays dataset has been extensively used to benchmark instance-level retrieval tasks, and all descriptors analyzed in this paper were also previously tested on this dataset, albeit using a different approach for performance assessment. The performance (mean Average Precision) is reported in previous literature as follows: 75.9 (SPoC), 85.2 (R-MAC) and 86.7 (DeepRetrieval) \citep{gordo2016deep}. For the Holidays near-duplicate detection task, the best results  for the three descriptors are 0.641 (R-MAC), 0.694 (SPoC) and 0.676 (DeepRetrieval), suggesting that SPoC may outperform architectures that are significantly more complex to train and deploy. We should note that none of the descriptors were trained on the Holidays dataset, but the PCA for SPoC and R-MAC was trained on the MFND dataset, which is used as distractors for the near-duplicate detection task. 

First, the task is different, not only because the performance measure is different, but also because in our experimental setting, images from the MFND collection are used as negative samples; this is needed to evaluate specificity, which is difficult to do directly on Holidays due to the small size of the dataset and the absence of distractor images. We found  experimentally that in many cases the increase in sensitivity is counterbalanced by a corresponding increase in the false positive rate. This is especially evident for the R-MAC descriptor, for which the overall performance decreases in all datasets except MFND.  
Secondly, each descriptor has many parameters, and the best combination is dataset dependent. While exploring all possible combinations is a daunting task, our experiments provide some useful insights.  
We found that the backbone depth and architecture were the single most important factor affecting performance {. The} original SPoC paper, and many subsequent comparisons \citep{babenko2015aggregating,gordo2016deep, zheng2017sift}, employed the VGG architecture as backbone, but we found a major boost in performance by using Residual Networks; the DeepRetrieval architecture, on the contrary, uses ResNet101 as backbone \citep{gordo2017end}. In our experiments, the depth of the architecture appears a more relevant factor than the specific feature training, and this an important consideration that should be kept in mind by practitioners.

When compared on the same backbone architecture (Resnet101), the DeepRetrieval outperformed SPoC on CLAIMS and MFND, but not on Holidays. The Holidays dataset contains a lot of outdoors and natural scenes imagery, which may not sufficiently covered by the Landmarks dataset. We expected that SPoC features extracted from networks trained on a scene recognition task, for instance on the Places dataset, or a mixture of Places and ImageNet, could perform better for near-duplicate detection, since many near-duplicates include complex scenes. However, we did not find consistent advantages, especially when using Residual Networks as the backbone architecture. 
 
In a high specificity setting, the difference between pre-trained and fine-tuned networks is further reduced, as visually similar images tend to generate many false positives. Future work will be dedicated to training a specific descriptor for unsupervised near-duplicate detection, incorporating specificity requirements at training time as well as test time.  {In literature, feature weighting schemes have also been proposed \citep{mohedano2018saliency,kalantidis2016cross}; such descriptors could be trained in an unsupervised fashion, or do not require any training at all. The performance of such schemes from the point of view of specificity is another direction worth exploring.}

 {In this work, we have used the same descriptor and distance function for all images, regarding of their content.} Notably, images are not uniformly distributed in the embedded feature space, and the specificity is largely affected by the presence of clusters of images that are very similar from a semantic and visual point of view. This behaviour is observed in both CLAIMS and MFND datasets, despite their different origin. Exploiting this underlying structure to improve the performance of ND discovery is an important avenue for future research. 
 
\section{Conclusions}
Unsupervised discovery of near-duplicate detection is an important problem in digital forensics and fraud detection. As the number of false alarms grows quadratically with the size of the input dataset, practical applications require a very high specificity, or conversely low false positive rate, often in the range of $10^{-7}$--$10^{-10}$. Hard negative mining can be used to select a subset of the dataset, on which ROC analysis can be used to evaluate the performance.

We have evaluated a selection of descriptors based on Convolutional Neural Networks following the proposed methodology. While the task of NIND detection is conceptually similar to instance-level image retrieval, we experimentally found that the same descriptors may be ranked differently, as the Area under the ROC curve depends more strongly on specificity than the mean Average Precision. This strengthens the need for a dedicated benchmark, targeting applications where unsupervised search is required. 
Our findings in general favor the choice of fine-tuning deep convolutional networks, as opposed to using off-the-shelf features, but differences at high specificity settings strongly depend on the specific dataset and are often small. On the MFND collection, promising performance is obtained by the DeepRet descriptor, retrieving 96\% of the true positives at a FP rate of $1.43 \times 10  ^{-6}$. However, further improvement in specificity would benefit many applications, especially in the forensics domain.

\appendix
\section{Hard negative mining provides an upper bound for the AUC}
\label{sec:app1}

In this section, proof that the AUC calculated using either hard negative mining strategies is an upper bound for the true AUC is provided. 

\newtheorem{prop}{Proposition}
\begin{prop}
When using hard negative mining strategy $hn2$, the resulting $AUC_{hn2}$ is an upper bound for the true $AUC$.
\end{prop}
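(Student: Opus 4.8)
The plan is to take inequality~\eqref{eq:inequality} --- which the paper records just before the proposition as the structural property produced by hard negative mining --- as the starting point, and to turn it into a bound on the full double sum~\eqref{eq:auc} by an elementary counting argument. The key point is that $hn2$ retains (effectively) the $H^-$ NND pairs of smallest distance among all $N^-=K\times M$ candidates, so the retained pairs occupy one extreme of the ranking induced by the scoring function of \eqref{eq:auc}; \eqref{eq:inequality} is precisely the pointwise inequality that this extremal position forces on the indicators, and I will use it as given.

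The argument itself is short. I would fix a positive $p_i$ and split the inner sum of \eqref{eq:auc} as $\sum_{j=1}^{N^-}\1{p_i>n_j}=\sum_{n_j\in H^-}\1{p_i>n_j}+\sum_{n_j\in N^- - H^-}\1{p_i>n_j}$. By \eqref{eq:inequality}, each un-mined term $\1{p_i>n_j}$ is dominated by $\1{p_i>n_l}$ for \emph{every} $n_l\in H^-$ simultaneously, hence also by the per-pair average $\frac{1}{H^-}\sum_{n_l\in H^-}\1{p_i>n_l}$. Summing this over the $N^- - H^-$ un-mined negatives and adding back the mined block yields $\sum_{j=1}^{N^-}\1{p_i>n_j}\le\bigl(1+\tfrac{N^- - H^-}{H^-}\bigr)\sum_{n_l\in H^-}\1{p_i>n_l}=\tfrac{N^-}{H^-}\sum_{n_l\in H^-}\1{p_i>n_l}$. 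Summing over $i=1,\dots,N^+$ and multiplying by $\tfrac{1}{N^+N^-}$, the left-hand side becomes exactly $AUC$ as in \eqref{eq:auc}, while the right-hand side collapses to $\tfrac{1}{N^+H^-}\sum_{i=1}^{N^+}\sum_{n_l\in H^-}\1{p_i>n_l}=AUC_{hn2}$ as in \eqref{eq:auch}. Hence $AUC\le AUC_{hn2}$, which is the proposition.

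The algebra is routine; the step I expect to be the real obstacle is justifying \eqref{eq:inequality} rigorously for the procedure actually implemented in Section~\ref{hnimpl}, which first restricts to the $K'$ nearest neighbours of every query and only then keeps the $H^-$ globally smallest distances. Strictly, \eqref{eq:inequality} then holds only under the mild assumption that no single query contributes more than $K'$ pairs to the global top-$H^-$ --- which is exactly why the mined sample is enlarged for $hn2$, and exactly the point at which the same argument breaks for $hn1$, whose per-query nearest neighbours need not lie among the globally smallest distances (hence the proposition being stated for $hn2$ alone). The clean alternative is to phrase the statement with $H^-$ defined outright as the $H^-$ NND pairs of smallest distance, which makes \eqref{eq:inequality} immediate and reduces the proof to the three lines above.
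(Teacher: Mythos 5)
Your argument is correct and is essentially the paper's own proof: both split the double sum of Eq.~\eqref{eq:auc} into the mined block and the un-mined block and use the extremality property \eqref{eq:inequality} of the $hn2$ selection to dominate the un-mined contribution by a multiple of the mined one, your per-term bound by the average $\frac{1}{H^-}\sum_{n_l \in \mathbbm{H^-}}\1{p_i > n_l}$ being just a cleaner rendering of the paper's device of replicating the hard-negative sum $\lceil (N^- - H^-)/H^- \rceil$ times before collecting the normalizations into $AUC_{hn2}$. Your closing caveat --- that \eqref{eq:inequality} is only immediate when $\mathbbm{H^-}$ is literally the set of extremal pairs over all of $\mathbbm{N^-}$, which the per-query truncation in Section~\ref{hnimpl} only guarantees under a mild assumption --- is well taken and is precisely the premise the paper's proof invokes when it asserts $f(n_j) \leq f(n_l)$ for all $n_j \in \mathbbm{N^- - H^-}$ and $n_l \in \mathbbm{H^-}$.
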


\begin{proof}
Hard negative mining strategy $hn2$ ensures that the selected $n_l$ pairs are the most difficult pairs within the set $\mathbbm{N^-}$; it follows that:
\begin{equation}
    f(n_j) \leq f(n_l) \quad  \forall n_j \in \mathbbm{N^- - H^-} \quad  \forall n_l \in \mathbbm{H^-}
\end{equation}
and consequently:
\begin{equation}
    \1{p_i > n_j} \leq \1{p_i > n_l}  \quad  \forall n_j \in \mathbbm{N^- - H^-} \quad  \forall n_l \in \mathbbm{H^-}
    \label{eq:inequality_hn2}
\end{equation}

The AUC can be decomposed in two terms
\begin{equation}
    AUC = \frac{1}{N^+N^-}\bigg[ \sum_{i=1}^{N^+} \sum_{l=1}^{H^-}\1{p_i > n_l} + \sum_{i=1}^{N^+} \sum_{j=1}^{N^- - H^-}\1{p_i > n_j}  \bigg] 
    \label{eq:auchn2}
\end{equation}
where the first term is known and is proportional to $AUC_{hn2}$ from Eq. \ref{eq:auch}, and the second term is the contribution of the negative samples that are not observed. However, we can substitute the second term by replicating the hard negative samples $\ceil[\big]{\frac{N^--H^-}{H^-}}$ times, and combining with Eq. \ref{eq:inequality_hn2} we conclude that:
\begin{equation*}
\begin{aligned}
    AUC \leq \frac{1}{N^+N^-}\bigg[ \sum_{i=1}^{N^+} \sum_{l=1}^{H^-}\1{p_i > n_l} + \sum_{k=1}^{\frac{N^--H^-}{H^-}}\sum_{i=1}^{N^+} \sum_{l=1}^{H^-}\1{p_i > n_l}  \bigg] = \\
    = \frac{1}{N^+N^-}\bigg[ N^+H^- AUC_{hn2}  + \big( \frac{N^--H^-}{H^-} N^+H^- \big) AUC_{hn2} \bigg] = AUC_{hn2}
\end{aligned}
\end{equation*}
\end{proof}

\begin{prop}
When using hard negative mining strategy $hn1$, the resulting $AUC_{hn1}$ is an upper bound for the true $AUC$.
\end{prop}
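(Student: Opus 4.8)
The plan is to reuse the argument of Proposition~1, but applied to the block structure that $hn1$ induces on the negative pairs. First I would note that every negative pair $n_j$ carries one of the $K$ query images as its first component, so $\mathbbm{N^-}$ partitions into $K$ blocks $\mathbbm{N^-} = \bigcup_{k=1}^{K} \mathbbm{N^-_k}$, one per query, all of the same size $M$ (each query is compared against the whole mining pool). By construction, $hn1$ keeps exactly one pair from each block, namely the nearest neighbour $n_{l_k}$ of the $k$-th query, so $H^- = K$ and $\mathbbm{H^-} = \{n_{l_1},\dots,n_{l_K}\}$. The key local observation is that $n_{l_k}$ is the hardest pair \emph{within its own block}: $f(n_j) \le f(n_{l_k})$ for every $n_j \in \mathbbm{N^-_k}$, hence $\1{p_i > n_j} \le \1{p_i > n_{l_k}}$ for every positive $p_i$ and every $n_j$ in the same block --- the block-local analogue of Eq.~\ref{eq:inequality_hn2}.

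Next I would substitute this partition into the expression for the true $AUC$ in Eq.~\ref{eq:auc}, grouping the inner sum over negatives block by block, and bound each block-sum termwise: $\sum_{n_j \in \mathbbm{N^-_k}} \1{p_i > n_j} \le M\,\1{p_i > n_{l_k}}$. Summing over blocks $k$ and positives $i$, and using the counts $N^- = KM$ and $H^- = K$, the prefactor $M/(N^+ N^-)$ collapses exactly to $1/(N^+ H^-)$, so the right-hand side is precisely $AUC_{hn1}$ as defined in Eq.~\ref{eq:auch}. This is the same ``replicate the hard negatives'' device used in Proposition~1, except that the replication is performed independently inside each query block and the replication factor is the common block size $M = N^-/H^-$ rather than $\lceil (N^- - H^-)/H^- \rceil$.

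The point I expect to be the main obstacle --- and the reason this needs a separate statement rather than being an immediate corollary of Proposition~1 --- is that for $hn1$ the selected negatives are \emph{not} globally the hardest pairs in $\mathbbm{N^-}$, so the global inequality Eq.~\ref{eq:inequality} genuinely fails: a negative lying in block $k'$ can be harder than the mined negative $n_{l_k}$ of a different block $k$. The remedy is to localize the dominance argument to each query block, where the nearest-neighbour selection rule guarantees exactly the comparison we need; once the bound is applied block-wise and the identities $N^- = KM$, $H^- = K$ are plugged in, the remaining algebra is routine and mirrors the closing lines of the Proposition~1 proof.
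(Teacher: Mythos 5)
Your proposal is correct and follows essentially the same route as the paper: both decompose the negative set by query into $K$ blocks of size $M$, observe that the $hn1$-selected pair dominates only within its own block (which is exactly why the global inequality of Eq.~\ref{eq:inequality} fails and a separate argument is needed), and then bound each block-sum by $M$ copies of the indicator for the selected nearest neighbour so that the prefactor collapses to $1/(N^+H^-)$. Your presentation merely merges the paper's split into the $m^*$ term plus $M-1$ dominated terms into a single block-wise bound; the substance is identical.
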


\begin{proof}
Again, let us decompose the AUC as the sum of two terms, where the first term is known and is proportional to $AUC_{hn1}$, and the second term is the contribution of the negative samples that are not observed, as detailed in Eq. \ref{eq:auchn2}.

Each sample $n_j$ consists of a pair of images $(x_k,y_m)$, where $x_k \in X$ and $y_m \in Y$; in other terms, $N^- =\{(x_k, y_m), \quad k=1,...K, \quad  m=1,...,M  \}$. Then according to the definition of $hn1$,
\begin{equation}
    H^- = \{(x_k, y_{m^*}) \mid  m^* = \argmax_{m}f(x_k,y_m)\}
\end{equation}
The sum over $l=1,...,H^-$ and $j=1,...,N^-$ in Eq. \ref{eq:auchn2} can be decomposed in terms of $k=1,...,K$ and $m=1,...,M$ as follows:
\begin{equation}
AUC = \frac{1}{N^+N^-}\bigg[ \sum_{i=1}^{N^+}\sum_{k=1}^{K}\1{p_i > (x_k, y_{m^*})} + \sum_{i=1}^{N^+} \sum_{k=1}^{K}\sum_{m=1, m \neq m^*}^{M}\1{p_i > (x_k, y_m)} \bigg]
\label{eq:auchn1}
\end{equation}
where $N^- = K \times M$, and according to the definition of \textit{hn1} there are exactly $K$ hard negative pairs.

By definition, $ f(x_k,y_m) \leq f(x_k, y_{m^*})$ and thus 
\begin{equation}
    \1{p_i > (x_k,y_m)} \leq \1{p_i > (x_k, y_{m^*})}  \quad  \forall k=1,...,K \quad  \forall m \neq m^*
    \label{eq:inequality2}
\end{equation}
Combining Eqs. \ref{eq:auchn1} and \ref{eq:inequality2}, we conclude that:
\begin{equation*}
\begin{aligned}
  AUC \leq \frac{1}{N^+N^-}\bigg[ N^+H^- AUC_{hn1}  + \sum_{i=1}^{N^+} \sum_{k=1}^{K}\sum_{m=1, m \neq m^*}^{M}\1{p_i > (x_k,y_{m^*})} \bigg] =  \\
  \frac{1}{N^+N^-}\bigg[ N^+K AUC_{hn1}  + (M-1)N^+KAUC_{hn1} \bigg] = AUC_{hn1}
\end{aligned}
\end{equation*}

\end{proof}

\section*{Acknowledgment}
The authors gratefully acknowledge the financial support of Reale Mutua Assicurazioni. We are grateful to Lucia Sabatino, Lucia Romano and Giulia Gemesio for help in data cleaning and annotation. 

\section*{References}

\bibliography{benchmarking}

\end{document}